\def\eqref#1{equation~\ref{#1}}
\def\1{\bm{1}}
\DeclareMathAlphabet{\mathsfit}{\encodingdefault}{\sfdefault}{m}{sl}
\SetMathAlphabet{\mathsfit}{bold}{\encodingdefault}{\sfdefault}{bx}{n}
\theoremstyle{plain}
\newtheorem{theorem}{Theorem}[section]
\newtheorem{proposition}[theorem]{Proposition}
\newtheorem{lemma}[theorem]{Lemma}
\theoremstyle{definition}
\newtheorem{remark}[theorem]{Remark}
\providecommand{\jm}{\textcolor{black}}
\title{Scalable Simulation-free Entropic Unbalanced Optimal Transport}
\author{Jaemoo Choi\\
Georgia Institute of Technology\\
\texttt{jchoi843@gatech.edu} \\
\And
Jaewoong Choi\thanks{Corresponding Author} \\
Korea Institute for Advanced Study \\
\texttt{chjw1475@kias.re.kr} \\
}
\begin{document}

\maketitle

\begin{abstract}
The Optimal Transport (OT) problem investigates a transport map that connects two distributions while minimizing a given cost function. Finding such a transport map has diverse applications in machine learning, such as generative modeling and image-to-image translation. In this paper, we introduce a scalable and simulation-free approach for solving the Entropic Unbalanced Optimal Transport (EUOT) problem. We derive the dynamical form of this EUOT problem, which is a generalization of the Schrödinger bridges (SB) problem. Based on this, we derive dual formulation and optimality conditions of the EUOT problem from the stochastic optimal control interpretation. By leveraging these properties, we propose a simulation-free algorithm to solve EUOT, called Simulation-free EUOT (SF-EUOT). While existing SB models require expensive simulation costs during training and evaluation, our model achieves simulation-free training and one-step generation by utilizing the reciprocal property. Our model demonstrates significantly improved scalability in generative modeling and image-to-image translation tasks compared to previous SB methods.
\end{abstract}

\section{Introduction}
\label{sec:intro}

The distribution transport problem investigates finding a transport map that bridges one distribution to another. The distribution transport problem has various applications in machine learning, such as generative modeling  \citep{otm, uotm, uotmsd, jko, imf}, image-to-image translation \citep{unsb}, and biology \citep{cellot, population}. Optimal Transport (OT) \citep{ComputationalOT, villani} explores the most cost-efficient transport map among them. For discrete measures, the OT map can be computed exactly through convex optimization, but it is computationally expensive. In contrast, the Entropic Optimal Transport (EOT) problem presents strict convexity and can be computed more efficiently through the Sinkhorn algorithm \citep{sinkhorn, EOT}.
For continuous measures, several machine learning approaches have been proposed for learning the EOT problem \citep{ipf, imf, enotdiffusion}. These approaches typically utilize a dynamic version of EOT, known as the Schrödinger Bridge problem \citep{sb2, imf}. 

The Schrödinger Bridge (SB) is a finite-time diffusion process that bridges two given distributions while minimizing the KL divergence to a reference process \citep{sb1, sb2}. Various works for solving this SB problem have been proposed \citep{sb_toy1, sb_toy2, bortoli2021diffusion, ipf, imf}. However, these methods tend to exhibit scalability challenges when the source and target distributions are in high-dimensional spaces and the distance between them is large. Consequently, these approaches tend to be limited to low-dimensional datasets \citep{sb_toy1, sb_toy2} or rely on a pretraining process for generative modeling task \citep{ipf, imf}. Moreover, existing approaches require simulation of the diffusion process, leading to excessive training and evaluation costs \citep{imf, ipf, enotdiffusion}.

In this paper, we propose an algorithm for solving the Entropic Unbalanced Optimal Transport (EUOT) problem, called \textit{\textbf{Simulation-free EUOT (SF-EUOT)}}. The EUOT problem generalizes the EOT problem by relaxing the precise matching of the target distribution into soft matching through $f$-divergence minimization. Depending on the chosen divergence measure, the EUOT problem can encompass the EOT problem.
To solve this, our model is based on the stochastic optimal control interpretation of the dual formulation of EUOT. Compared to previous works, our model introduces a novel parametrization consisting of the static generator for path measure $\rho$ and the time-dependent value function $V$. This parametrization enables simulation-free training and one-step generation through the reciprocal property (Sec. \ref{sec:em_alg}). Our model achieves significantly improved scalability compared to existing SB models. 
Specifically, our model achieves a FID score of 3.02 with NFE 1 on CIFAR-10 without pretraining, which is comparable to the state-of-the-art results of the SB model with pretraining. For instance, IPF \citep{ipf} achieves a FID score of 3.01 with NFE 200 and IMF \citep{imf} achieves 4.51 with NFE 100. 
Furthermore, our model outperforms several OT models on image-to-image translation benchmarks. Our contributions can be summarized as follows:
\begin{itemize}
    \item We derive the dynamical form of the Entropic Unbalanced Optimal Transport (EUOT) problem, which is a generalization of the Schrödinger Bridge Problem. 
    \item We derive the dual formulation of the EUOT problem and its optimality conditions through the stochastic optimal control interpretation (Sec. \ref{sec:dynamic_dual_euot}).
    \item We propose an efficient method for solving  EUOT problem based on these interpretations. Our model offers simulation-free training by utilizing the reciprocal property (Sec. \ref{sec:em_alg}).
    \item Our model greatly improves the scalability of EUOT models. To the best of our knowledge, our model is the first EOT model that presents competitive results without pretraining (Sec. \ref{sec:exp}).
\end{itemize}

\paragraph{Notations and Assumptions}
Let $\mathcal{X} = \mathbb{R}^d$ where $d$ is a data dimension. 
Let $\mathcal{M}_2 (\mathcal{X})$ be a set of positive Borel measures with finite second moment.
Moreover, let $\mathcal{P}_2(\mathcal{X})$ be a set of probability densities in $\mathcal{M}_2 (\mathcal{X})$.
Let $\Phi_2(\mathcal{X})$ be a set of continuous functions $\varphi : \mathcal{X}\rightarrow \mathbb{R}$ such that $|\varphi(x)| \leq a+ b\lVert x\rVert^2 \ \text{for all} \ x\in \mathcal{X},$ for some $a,b >0$. 
Let $\Phi_{2,b}$ be a set of bounded-below functions in $\Phi_2$.
Throughout the paper, let $\mu,\nu \in \mathcal{P}_2(\mathcal{X})$ be the absolutely continuous source and target distributions, respectively.
In generative modeling tasks, $\mu$ and $\nu$ correspond to \textit{tractable noise} and \textit{data distributions}.
$\Pi(\mu, \nu)$ denotes the set of joint probability distributions on $\mathcal{P}_2(\mathcal{X}\times \mathcal{X})$ whose marginals are $\mu$ and $\nu$.
$W_t$ represents the standard Wiener process on $\mathcal{X}$.

\section{Background}
In this section, we provide a brief overview of key concepts in the OT theory and the Schr\"{o}dinger Bridge (SB) problem. For detailed discussion on related works, please refer to Appendix \ref{appen:relatedworks}.

\subsection{Optimal Transport Problems}
\paragraph{Kantorovich's Optimal Transport (OT)}
Kantorovich's Optimal Transport (OT) \citep{Kantorovich1948} problem addresses the problem of searching for the most cost-effective way to transform the source distribution $\mu$ to the target distribution $\nu$. Formally, it can be expressed as the following minimization problem:
\begin{equation} \label{eq:kantorovichOT}
    \inf_{\pi \in \Pi(\mu, \nu)} \int \frac{1}{2} \| x-y \|_{2}^2 d\pi(x,y)   
\end{equation}
Here, we consider the quadratic cost $ \frac{1}{2} \| x-y \|_{2}^{2}$. For an absolutely continuous $\mu \in \mathcal{P}_2(\mathcal{X})$, the optimal transport plan $\pi^{\star}$ exists. Moreover, the optimal transport is deterministic. In other words, there exists a deterministic OT map $T^{\star}:\mathcal{X} \rightarrow \mathcal{X}$ such that $(Id \times T^{\star})_{\#} \mu  = \pi^{\star}$, i.e. $\pi^{\star}(\cdot|x) = \delta_{T^{\star}(x)}$ is a delta measure.

\paragraph{Entropic Optimal Transport (EOT)}
The Entropic Optimal Transport (EOT) problem is an entropy-regularized version of the OT problem, which introduces an entropy term for the coupling $\pi$ to the standard OT problem.
Formally, EOT can be written as the following minimization problem:
\begin{equation} \label{eq:eot}
    \inf_{\pi \in \Pi(\mu, \nu)} \int_{\mathcal{X}\times \mathcal{X}} \frac{1}{2} \lVert x - y \rVert^2 d\pi (x,y) - \sigma^2 H(\pi),
\end{equation}
where $H(\pi)$ denotes the entropy of $\pi$ and $\sigma > 0$.
The optimal transport plan $\pi^\star$ of Eq. \ref{eq:eot} is unique due to the strict convexity of $H(\pi)$.
Moreover, when $\sigma >0$, the optimal transport is a stochastic map, i.e. $\pi^\star(\cdot|x)$ is a stochastic map.

\subsection{Schr\"{o}dinger Bridge Problem and its Properties} \label{sec:background_sb}
In this section, we discuss the properties of the Schr\"{o}dinger Bridge (SB) problem \citep{sb1, sb2}. First, we describe the reciprocal property of SB \citep{reciprocal}. Then, we introduce the equivalence between the SB problem and EOT \citep{sb1, sb2}. Finally, we present the dual formulation of EOT \citep{enotdiffusion}. These properties will be extended to the Entropic Unbalanced Optimal Transport (EUOT) in Sec. \ref{sec:dynamic_dual_euot}.

\paragraph{Schr\"{o}dinger Bridge (SB) with Wiener prior}
Let $\Omega := [0,1] \times \mathcal{X}$, where $t\in [0,1]$ represents the time variable. Throughout this paper, we denote the probability density induced by the following stochastic process $\{X^u_t\}$ as $\mathbb{P}^u \in \mathcal{P}_2(\Omega)$. In other words, $\mathbb{P}^u_t$ represents the distribution of the stochastic process $\{X^u_t\}$:
\begin{equation} \label{eq:sto_process}
    dX^u_t = u(t,X^u_t) dt + \sigma dW_t, \quad \ X^u_0 \sim \mu.
\end{equation}
with drift $u:\Omega \rightarrow \mathcal{X}$, diffusion term $\sigma > 0$, and initial distribution $\mu$.
Moreover, let $\mathbb{Q}$ be the Wiener process with a diffusion term of $\sigma$.
Then, the SB problem aims to find the probability density $\mathbb{P}^u$ that is most close to the Wiener process $\mathbb{Q}$.
Formally, the SB problem is defined as follows:
\begin{equation} \label{eq:SB_KL}
    \inf_u D_{\text{KL}}(\mathbb{P}^u | \mathbb{Q}) \quad
    {\rm s.t.} \quad \mathbb{P}^u_0 = \mu, \ \mathbb{P}^u_1 = \nu \qquad
    {\rm where} \quad d\mathbb{Q}_t = \sigma dW_t, \, \ \mathbb{Q}_0 \sim \mu.
\end{equation}
Then, we can express this SB problem in terms of the drift $u$ and the path measure $\{\rho_t\}_{t\in [0,1]}$ of the stochastic process $X^u_t$. By Girsanov's theorem \citep{girsanov}, 
\begin{equation} \label{eq:girsanov}
     D_{\text{KL}}(\mathbb{P}^u | \mathbb{Q}) = \frac{1}{\sigma^2} \, \mathbb{E} \left[ \int_0^1 \frac{1}{2} \| u_t(X^u_t) \|^2 dt \right].
\end{equation}
Moreover, by the Fokker-Planck equation \citep{fokker-planck}, $\{\rho_t\}_{t\in [0,1]}$ satisfies the following equation:
\begin{equation} \label{eq:Fokker-Planck}
     \partial_t \rho_t + \nabla \cdot (u_t \rho_t ) - \frac{\sigma^2}{2} \Delta \rho_t = 0, \quad \rho_0 = \mu.
\end{equation}
Therefore, by combining Eq. \ref{eq:girsanov} and \ref{eq:Fokker-Planck}, the SB problem (Eq. \ref{eq:SB_KL}) can be reformulated as follows:
\begin{align} \label{eq:SB_dynamic}
        \inf_u \int_0^1 \int_{\mathcal{X}} \frac{1}{2} \| u_t(x) \|^2 d\rho_t (x) dt, \quad
        {\rm s.t.} \quad \partial_t& \rho_t + \nabla \cdot (u_t \rho_t ) - \frac{\sigma^2}{2} \Delta \rho_t = 0, \,\, \rho_0 = \mu, \rho_1 = \nu.
\end{align}

\paragraph{Reciprocal Property of SB}
In this paragraph, we provide an intuition of the Reciprocal Property \citep{reciprocal, sb1, imf} of SB.
To begin with, the minimization objective of the SB problem in Eq. \ref{eq:SB_KL} can be decomposed into the KL divergence of the joint distribution between $t=0$ and $t=1$, and the conditional KL divergence. 
Formally, the decomposition is written as follows:
\begin{equation} \label{eq:decomposition_KL}
 D_{\text{KL}}(\mathbb{P}^u | \mathbb{Q}) = D_{\text{KL}}(\mathbb{P}^u_{0,1} | \mathbb{Q}_{0,1}) + \int_{\mathcal{X}\times \mathcal{X}} \int_0^1 D_{\text{KL}}(\mathbb{P}^u_t(\cdot|x,y) | \mathbb{Q}_t (\cdot|x,y)) \, dt \, d\mathbb{P}^u_{0,1}(x,y),   
\end{equation}
where $\mathbb{P}^u_{0,1} \in \Pi(\mu, \nu)$ denotes the joint distribution on $t=0$ and $t=1$ induced by $\mathbb{P}^u$.
Let $\mathbb{P}^\star$ be the optimal solution for the LHS of Eq. \ref{eq:decomposition_KL}.
Surprisingly, \citet{reciprocal} discovered that the last term on the RHS of Eq. \ref{eq:decomposition_KL} is zero for $\mathbb{P}^\star$, i.e., $\mathbb{P}^{\star}_t (\cdot| x,y) = \mathbb{Q}_t (\cdot| x,y)$ for $(x,y) \sim \mathbb{P}^{\star}_{0,1}$ almost surely. \textbf{This property allows us to characterize the bridge measure between $X_{0}^{u}=x$ and $X_{1}^{u}=y$.}
\begin{equation} \label{eq:reciprocal}
    \mathbb{P}^{\star}_t (\cdot|x,y) = \mathbb{Q}_t (\cdot| x,y)= \mathcal{N}(\cdot| (1-t)x+ty, \sigma^2 t (1-t) I).
\end{equation}
We refer to Eq. \ref{eq:reciprocal} as the \textit{reciprocal property} of SB. This property will be utilized in our neural network parametrization in Sec. \ref{sec:em_alg}.

\paragraph{Equivalence between SB and EOT} 
The reciprocal property establishes equivalence between SB and EOT (For rigorous explanation, see \citep{sb2, sb1}). By definition, the KL divergence between two static transport plans is given as follows:
\begin{equation} \label{eq:def_kl}
    \sigma^2 D_{\text{KL}}(\pi | \mathbb{Q}_{0,1}) = \int_{\mathcal{X}\times \mathcal{X}} \frac{1}{2} \lVert x - y \rVert^2 d\pi (x,y) - \sigma^2 H(\pi).
\end{equation}
Here, the reciprocal property implies that $D_{\text{KL}}(\mathbb{P}^{\star} | \mathbb{Q}) = D_{\text{KL}}(\mathbb{P}^{\star}_{0,1} | \mathbb{Q}_{0,1})$ in Eq. \ref{eq:decomposition_KL}. Combining this with Eq. \ref{eq:def_kl}, we can show that the dynamical SB problem (Eq. \ref{eq:SB_KL}) is equivalent to the static EOT problem (Eq. \ref{eq:eot}):
\begin{equation}
    \sigma^2 D_{\text{KL}}(\mathbb{P}^{\star} | \mathbb{Q}) = \inf_{\pi \in \Pi(\mu, \nu)} \left[ \int \frac{1}{2} \| x-y \|^2 d\pi(x,y) - \sigma^2 H(\pi) \right] \quad {\rm where} \quad \pi^{\star} = \mathbb{P}^{\star}_{0,1}.
\end{equation}

\paragraph{Dual Form of EOT}
Additionally, we introduce a dual form of the EOT problem presented in \citet{enotdiffusion}. As described in Eq. \ref{eq:def_kl}, the entropy-regularized minimization objective in EOT can be interpreted as minimizing $D_{\text{KL}} (\pi|\mathbb{Q}_{0,1})$ with respect to the joint distribution $\pi \in \Pi(\mu,\nu)$. Inspired by the weak OT theory \citep{weakOT}, \citet{enotdiffusion} derived the following dual form:
\begin{equation}
    \inf_{\pi\in\Pi(\mu,\nu)} D_{\text{KL}} (\pi|\mathbb{Q}_{0,1}) = \sup_{V\in \Phi_{2,b}} \left[ \int_{\mathcal{X}} V^C(x)d\mu(x) - \int_{\mathcal{X}} V(y) d\nu(y) \right],
\end{equation}
where 
$
    V^C(x) := \inf_{\rho_1 \in \mathcal{P}_2(\mathcal{X})}\left[ D_{\text{KL}}\left(\rho_1 \ | \ \mathbb{Q}_{1|0}(\cdot|x) \right) +\int_{\mathcal{X}} V(y) \nu(y) \right].
$
By applying the Girsanov theorem (Eq. \ref{eq:girsanov}) to the KL divergence in $V^C$, \citet{enotdiffusion} obtain the following problem:
\begin{align} \label{eq:weakot_eot}
    \begin{split}
        \sup_{V\in \Phi_{2,b}} &\inf_{u} \left[ \mathbb{E} \left[\int_0^1 \frac{1}{2} \| u_t(X^u_t) \|^2 dt + V(X^u_1) \right] -  \int_{\mathcal{X}} V(y) d\nu(y) \right], \ {\rm s.t.} \ X^u_0 \sim \mu, \ X^u_1 \sim \nu.
    \end{split}
\end{align}

\section{Dynamcial and Dual Form of Entropic Unbalanced Optimal Transport}
\label{sec:dynamic_dual_euot}

In this section, \textbf{we derive various formulations of the \textit{Entropic Unbalanced Optimal Transport (EUOT)}.}
First, we introduce the EUOT problem and derive its dynamical formulation (Theorem \ref{theorem:dynamic_EUOT}).
The dynamical formulation of EUOT encompasses the Schrödinger Bridge (SB) problem, which is a dynamical form of EOT (Sec. \ref{sec:background_sb}).
Next, we derive the dynamical dual form of EUOT by leveraging the SB theory (Theorem \ref{theorem:weakOTdual}).
The dynamical dual will play a crucial role in deriving our learning objective and we will utilize the reciprocal property from the dynamical for our simulation-free parametrization in Sec \ref{sec:em}.
For detailed proof of the theorems, refer to Appendix \ref{appen:proofs}.

\paragraph{Entropic Unbalanced Optimal Transport (EUOT)}
In this paper, we consider the EUOT problem with a fixed source measure constraint, i.e., $\pi_{0}=\mu$ (Eq. \ref{eq:euot}). While EOT assumes precise matching of two measures, i.e., $\pi_{0}=\mu, \pi_{1}=\nu$ (Eq. \ref{eq:eot}), EUOT relaxes the marginal constraint using the $f$-divergence $D_\Psi$. This unbalanced variant of the optimal transport problem offers outlier robustness \citep{robust-ot, uotm} and the ability to handle class imbalance in datasets \citep{eyring2024unbalancedness}.
Formally, the EUOT problem is defined as follows:
\begin{equation} \label{eq:euot}
    \inf_{\pi_0 = \mu, \pi \in \mathcal{P}_2 (\mathcal{X}\times \mathcal{X})} \left[ \int_{\mathcal{X}\times \mathcal{X}} \frac{1}{2} \lVert x - y \rVert^2 d\pi (x,y) - \sigma^2 H(\pi) + \alpha D_\Psi (\pi_1 | \nu) \right],
\end{equation}
where $\alpha > 0$ denotes the divergence penalization intensity and $\Psi:[0,\infty)\rightarrow [0,\infty]$ is assumed to be a convex, lower semi-continuous, non-negative function, and $\Psi(1)=0$.
We call $\Psi$ an \textit{entropy function} of $D_\Psi$.
Furthermore, due to the convexity of $\Psi$, the solution $\pi^\star$ of Eq. \ref{eq:euot} is unique. 

\begin{remark}[\textbf{EUOT is a Generalization of EOT}] \label{remark_euot}
    Suppose $\Psi$ is a convex indicator $\iota$ at $\{1\}$, i.e. $\iota(x) = 0$ if $x=1$ and $\iota(x) = \infty$ otherwise. Then, the corresponding $f$-divergence $D_\iota(\rho_1 | \rho_2)$ equals 0 if $\rho_1 = \rho_2$ almost surely and $\infty$ otherwise. To obtain a finite objective in Eq. \ref{eq:euot}, it must hold $\pi_1 = \nu$ almost surely. Therefore, when $\Psi = \iota$, EUOT becomes EOT.
\end{remark}

\paragraph{Dynamical Formulation of EUOT}
The following theorem proves the \textbf{dynamical formulation of the EUOT problem}. Note that, compared to the dynamical formulation for EOT (SB) (Eq. \ref{eq:SB_dynamic}), the minimization objective (Eq. \ref{eq:dynamic_euot}) contains an additional divergence penalization term. Moreover, the theorem states that the optimal path measure $\mathbb{P}^\star$ for Eq. \ref{eq:dynamic_euot} satisfies the reciprocal property (Eq. \ref{eq:reciprocal_euot}), as in the SB problem. 
\textbf{This reciprocal property will be leveraged for the simulation-free parametrization} in Sec \ref{sec:em}.
\begin{theorem} \label{theorem:dynamic_EUOT}
    The EUOT problem is equivalent to the following dynamical transport problem:
    \begin{align} \label{eq:dynamic_euot}
        \begin{split}
            &\inf_u \left[ \int_0^1 \int_{\mathcal{X}} \frac{1}{2}\| u(t,x) \|^2 d\rho_t(x) dt + \alpha D_\Psi (\rho_1| \nu)\right],
        \end{split}
    \end{align}
    where $\partial_t \rho_t + \nabla \cdot (u_t \rho_t ) - \frac{\sigma^2}{2} \Delta \rho_t = 0$ and $\rho_0 = \mu$.
    Moreover, the optimal solution $\mathbb{P}^\star$ satisfies the reciprocal property, i.e., 
    \begin{equation} \label{eq:reciprocal_euot}
        \mathbb{P}^{\star}_t (\cdot|x,y) = \mathcal{N}(\cdot| (1-t)x+ ty, \sigma^2 t (1-t) I), \quad (x,y) \sim \mathbb{P}^\star_{0,1} \text{-almost surely.}
    \end{equation}
\end{theorem}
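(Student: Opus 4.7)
The plan is to establish the two claims separately: (i) the equivalence between the static EUOT problem in Eq.~\ref{eq:euot} and its dynamical formulation Eq.~\ref{eq:dynamic_euot}, and (ii) the reciprocal property for the optimal path measure. The core idea is to reduce both problems to a nested optimization that isolates the choice of terminal marginal $\rho_1$ and then invoke the SB--EOT equivalence recalled in Sec.~\ref{sec:background_sb} on the inner subproblem.

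For (i), I would rewrite the dynamical objective as a two-level problem in which the outer level picks the terminal marginal $\rho_1$ and the inner level is a classical Schr\"odinger Bridge between $\mu$ and $\rho_1$. Concretely, Eq.~\ref{eq:dynamic_euot} becomes
\begin{equation*}
\inf_{\rho_1 \in \mathcal{P}_2(\mathcal{X})} \left\{ \alpha D_\Psi(\rho_1|\nu) + \inf_{u:\,\rho_0 = \mu,\ \rho_{t=1} = \rho_1}\int_0^1 \int_\mathcal{X} \tfrac{1}{2}\|u_t(x)\|^2\, d\rho_t(x)\, dt \right\}.
\end{equation*}
For each admissible $\rho_1$, the inner infimum is exactly the dynamical SB problem in Eq.~\ref{eq:SB_dynamic}. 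By the SB--EOT equivalence, it equals $\inf_{\pi \in \Pi(\mu,\rho_1)}\bigl[\int \tfrac{1}{2}\|x-y\|^2 d\pi(x,y) - \sigma^2 H(\pi)\bigr]$. Substituting back and collapsing the nested infima $\inf_{\rho_1}\inf_{\pi \in \Pi(\mu,\rho_1)}$ into a single infimum over couplings $\pi$ with $\pi_0 = \mu$ (parametrizing by $\rho_1 = \pi_1$) recovers Eq.~\ref{eq:euot}.

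For (ii), let $\mathbb{P}^\star$ be a dynamical optimum and $\pi^\star := \mathbb{P}^\star_{0,1}$ the induced coupling with terminal marginal $\rho_1^\star$. By the nested argument above, $\pi^\star$ must also solve the static EOT problem between $\mu$ and $\rho_1^\star$. Girsanov (Eq.~\ref{eq:girsanov}) identifies the dynamical cost with $\sigma^2 D_{\text{KL}}(\mathbb{P}^\star|\mathbb{Q})$, while the matching static EOT cost equals $\sigma^2 D_{\text{KL}}(\pi^\star | \mathbb{Q}_{0,1})$ (Eq.~\ref{eq:def_kl}). Plugging both identities into the KL decomposition Eq.~\ref{eq:decomposition_KL} forces the conditional term to vanish, so $\mathbb{P}^\star_t(\cdot|x,y) = \mathbb{Q}_t(\cdot|x,y)$ for $\pi^\star$-a.e.\ $(x,y)$. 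Since $\mathbb{Q}$ is the Wiener process with diffusivity $\sigma$, its pinned bridge is the Brownian bridge $\mathcal{N}((1-t)x + ty,\, \sigma^2 t(1-t) I)$, which is Eq.~\ref{eq:reciprocal_euot}.

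The main obstacle will be a rigorous justification of the two-level decomposition and the interchange of infima. One must restrict the outer problem to $\rho_1$ for which $D_\Psi(\rho_1|\nu)$ is finite and the inner SB problem is well-posed, then check that the joint problem admits a minimizer using lower-semicontinuity and convexity of $\Psi$ together with the strict convexity of $-\sigma^2 H(\pi)$ (which simultaneously yields the uniqueness of $\pi^\star$ claimed after Eq.~\ref{eq:euot}). A secondary technicality is confirming that the classical SB existence/equivalence theory of Sec.~\ref{sec:background_sb} applies to arbitrary targets $\rho_1 \in \mathcal{P}_2(\mathcal{X})$ reachable from $\mu$, so that the inner reduction to static EOT can be applied pointwise in $\rho_1$.
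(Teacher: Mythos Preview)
Your proposal is correct and follows essentially the same approach as the paper: both arguments freeze the terminal marginal $\rho_1^\star$, recognize that the inner problem is then a standard Schr\"odinger Bridge between $\mu$ and $\rho_1^\star$, invoke the SB--EOT equivalence to pass to the static formulation, and obtain the reciprocal property from the SB theory. The paper's proof is more terse---it starts directly from the optimizer $u^\star$, notes (via the superlinearity assumption $\Psi'_\infty=\infty$) that $\rho_1^\star$ is absolutely continuous with respect to $\nu$, and then applies the SB reciprocal property and Eq.~\ref{eq:def_kl} in one stroke---whereas your explicit two-level decomposition $\inf_{\rho_1}\inf_{\pi\in\Pi(\mu,\rho_1)}$ makes the equivalence of values more transparent before turning to the optimizer.
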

\begin{remark}[\textbf{Dynamic form of EUOT is a Generalization of SB}]
    As discussed in Remark \ref{remark_euot}, suppose the entropy function $\Psi$ is the convex indicator $\iota$. By the same argument, $\rho_1 = \nu$ almost surely. Therefore, the dynamical EUOT (Eq. \ref{eq:dynamic_euot}) becomes equivalent to the SB (Eq. \ref{eq:SB_dynamic}) when $\Psi = \iota$. 
\end{remark}

\paragraph{Dynamical Dual form of EUOT}
By applying the Fenchel-Rockafellar theorem \citep{fenchel-rockafellar} and leveraging the dual form in \citet{enotdiffusion}, we derive the following dynamical dual formulation of EUOT:
\begin{proposition}[\textbf{Dual formulation of EUOT}] \label{theorem:weakOTdual}
    The static EUOT problem (Eq. \ref{eq:euot}) is equivalent to the following problem:
    \begin{align} \label{eq:weakot_euot}
        \begin{split}
            \sup_{V\in \Phi_{2,b}} \inf_{u} \left[ \mathbb{E} \left[\int_0^1 \frac{1}{2} \| u(t,X^u_t) \|^2 dt + V(X^u_1) \right] -  \int_{\mathcal{X}} \alpha \Psi^* \left( \frac{V(y)}{\alpha} \right) d\nu(y) \right].
        \end{split}
    \end{align}
    where $dX^u_t = u(t,X^u_t) dt + \sigma dW_t$ and $X^u_0 \sim \mu$.
\end{proposition}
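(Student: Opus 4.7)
The plan is to dualize the $f$-divergence penalty and then invoke the stochastic control / weak-OT dual from \citet{enotdiffusion} for the remaining entropy-regularized inner problem. First, rewrite the static EUOT objective in its KL form using Eq. \ref{eq:def_kl}, so that Eq. \ref{eq:euot} becomes
\begin{equation*}
\inf_{\pi:\pi_0=\mu}\Bigl[\sigma^2 D_{\text{KL}}(\pi\,|\,\mathbb{Q}_{0,1}) + \alpha D_\Psi(\pi_1\,|\,\nu)\Bigr]
\end{equation*}
(up to an additive constant depending only on $\mu$). Next, apply Fenchel--Moreau to the $f$-divergence term: since $\Psi$ is convex, lower semi-continuous, and non-negative with $\Psi(1)=0$, we have the variational representation
\begin{equation*}
\alpha D_\Psi(\pi_1\,|\,\nu) = \sup_{V\in\Phi_{2,b}}\Bigl[\int_{\mathcal{X}} V(y)\,d\pi_1(y) - \int_{\mathcal{X}} \alpha\Psi^{*}\!\bigl(V(y)/\alpha\bigr)\,d\nu(y)\Bigr],
\end{equation*}
where the restriction to $\Phi_{2,b}$ suffices because $\pi_1\in\mathcal{P}_2(\mathcal{X})$ gives finite pairing.

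I would then invoke a Fenchel--Rockafellar / Sion-type minimax argument to exchange $\inf_\pi$ and $\sup_V$. The inner functional in $\pi$ is convex (KL plus a linear term), the outer one in $V$ is affine, and the feasibility set for $\pi$ (probability measures with fixed first marginal and finite second moment) is convex; the quadratic-growth / bounded-below structure of $\Phi_{2,b}$ together with the $\mathcal{M}_2$ assumption provides the integrability needed to legitimize the swap, giving
\begin{equation*}
\sup_{V\in\Phi_{2,b}}\Bigl\{\inf_{\pi:\pi_0=\mu}\bigl[\sigma^2 D_{\text{KL}}(\pi\,|\,\mathbb{Q}_{0,1}) + \int V(y)\,d\pi_1(y)\bigr] - \int \alpha\Psi^{*}(V/\alpha)\,d\nu\Bigr\}.
\end{equation*}

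Third, identify the inner $\pi$-problem with the stochastic control problem of the proposition. For any admissible drift $u$, Girsanov (Eq. \ref{eq:girsanov}) gives $\sigma^2 D_{\text{KL}}(\mathbb{P}^u\,|\,\mathbb{Q}) = \mathbb{E}[\int_0^1\tfrac12\|u_t(X^u_t)\|^2\,dt]$, while the KL-decomposition (Eq. \ref{eq:decomposition_KL}) yields $D_{\text{KL}}(\mathbb{P}^u\,|\,\mathbb{Q}) \ge D_{\text{KL}}(\mathbb{P}^u_{0,1}\,|\,\mathbb{Q}_{0,1})$, with equality precisely when the reciprocal property holds. Hence
\begin{equation*}
\inf_u \mathbb{E}\Bigl[\int_0^1\tfrac12\|u(t,X^u_t)\|^2\,dt + V(X^u_1)\Bigr] \ge \inf_{\pi:\pi_0=\mu}\bigl[\sigma^2 D_{\text{KL}}(\pi\,|\,\mathbb{Q}_{0,1}) + \int V\,d\pi_1\bigr].
\end{equation*}
The reverse inequality is obtained by taking any candidate $\pi$, pasting the Brownian-bridge conditional $\mathbb{Q}_{t}(\cdot\,|\,x,y)$ onto it to form a path measure $\mathbb{P}$ with $\mathbb{P}_{0,1}=\pi$ and $D_{\text{KL}}(\mathbb{P}\,|\,\mathbb{Q}) = D_{\text{KL}}(\pi\,|\,\mathbb{Q}_{0,1})$, and reading off the associated drift $u$ via Girsanov. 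Chaining this equality with the swapped minimax yields exactly Eq. \ref{eq:weakot_euot}.

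The main obstacle is the rigorous justification of the inf-sup interchange in the right function spaces, combined with the dual attainment for the entropy-regularized transport with only one fixed marginal. In particular one has to check that $V\mapsto \inf_\pi[\sigma^2 D_{\text{KL}}(\pi\,|\,\mathbb{Q}_{0,1}) + \int V\,d\pi_1]$ is concave and upper semi-continuous on $\Phi_{2,b}$ (since it is an infimum of affine functionals in $V$) and that the coercivity provided by $D_\Psi$ and the second-moment constraint prevents escape to infinity; once this is in hand, the argument reduces to the one in \citet{enotdiffusion}, specialized to the unbalanced setting by replacing $\int V\,d\nu$ with the Legendre term $\int\alpha\Psi^{*}(V/\alpha)\,d\nu$.
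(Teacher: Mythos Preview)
Your proposal is correct and follows essentially the same route as the paper: rewrite EUOT in the KL form $\inf_{\pi:\pi_0=\mu}\bigl[\sigma^2 D_{\mathrm{KL}}(\pi\,|\,\mathbb{Q}_{0,1})+\alpha D_\Psi(\pi_1\,|\,\nu)\bigr]$, dualize the $f$-divergence, swap $\inf$ and $\sup$, and then pass to the controlled SDE via Girsanov and the reciprocal property (the last step the paper simply cites to \citet{enotdiffusion}, while you spell it out).

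The only noteworthy difference is how the swap is justified. You expand $D_\Psi$ via Fenchel--Moreau and then invoke a Sion-type minimax argument, flagging the interchange as ``the main obstacle''. The paper instead sets $F(\pi)=\sigma^2 D_{\mathrm{KL}}(\pi\,|\,\mathbb{Q}_{0,1})$ and $G(\pi)=\alpha D_\Psi(\pi_1\,|\,\nu)$, observes both are convex and lower semicontinuous on $\mathcal{P}_2$, and applies the Fenchel--Rockafellar theorem \citep{fenchel-rockafellar} directly to $F+G$, obtaining $\sup_{V\in\Phi_{2,b}}\bigl[-F^{*}(-V)-G^{*}(V)\bigr]$ with strong duality built in; here $-F^{*}(-V)=\inf_\pi[\sigma^2 D_{\mathrm{KL}}(\pi\,|\,\mathbb{Q}_{0,1})+\int V\,d\pi_1]$ and $G^{*}(V)=\int\alpha\Psi^{*}(V/\alpha)\,d\nu$, together with the identification $\mathcal{P}_2^{*}=\Phi_{2,b}$. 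This packages exactly the interchange you worry about into a single abstract duality statement, so your obstacle dissolves once you phrase it that way.
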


Note that when the entropy function $\Psi$ is a convex indicator $\iota$, the convex conjugate $\Psi^*(x)=x$. In this case, the dynamical dual form of EUOT (Eq. \ref{eq:weakot_euot}) reduces to Eq. \ref{eq:weakot_eot}. Therefore, Prop. \ref{theorem:weakOTdual} is an extension of the dual form of EOT. This dual formulation and its interpretation via stochastic optimal control will be utilized to derive our learning objective in Sec \ref{sec:em}.

\begin{algorithm}[t]
\caption{Simulation-free EUOT Algorithm}
\begin{algorithmic}[1]
\Require The source distribution $\mu$ and the target distribution $\nu$. Convex conjugate of entropy function $\Psi^*$. Generator network $T_\theta$ and the value network $v_\phi:[0,1]\times \mathcal{X} \rightarrow \mathbb{R}$. Time (sampling) distribution $\mathcal{T}$. Total iteration number $K$.
\For{$k = 0, 1, 2 , \dots, K$}
    \vspace{1pt}
    \State Sample a batch $x \sim \mu$, $y \sim \nu$, $t\sim \mathcal{T}$, $z, \eta_1, \eta_2 \sim \mathcal{N}(\mathbf{0}, \mathbf{I})$.
    \vspace{2pt}
    \State $\hat{y}\leftarrow T_\theta (x, z)$
    \vspace{2pt}
    \State $x_t \leftarrow (1-t) x + t \hat{y} + \sigma \sqrt{t(1-t)} \eta_1$.
    \vspace{2pt}
    \State $u_t \leftarrow \frac{\hat{y} - x_t}{1 - t}$, $\sigma_t \leftarrow \sigma \sqrt{\frac{(1-t-\Delta t)\Delta t}{1-t}}$.
    \vspace{3pt}
    \State $x_{t+\Delta t} \leftarrow x_t + u_t \Delta t +  \sigma_t  \eta_2$.
    \vspace{3pt}
    \State $R \leftarrow \frac{v_\phi(t+\Delta t, x_{t+\Delta t}) - v_\phi(t, x_t)}{\Delta t} -  \frac{\alpha}{2} \lVert \nabla v_\phi(t, x_t) \rVert^2 + \frac{\sigma^2}{2} \Delta v_\phi (t,x)$.
    \vspace{3pt}
    \State Update $v_\phi$ by the following loss: $\lambda_D \lVert R \rVert^p - \frac{\alpha}{2} \lVert \nabla v_\phi (t, x_t) \rVert^2 - v_\phi (1, \hat{y}) + \Psi^* (v_\phi (1, y)).$
    \State Update $T_\theta$ by the following loss: $ \lambda_G R $.
\EndFor
\end{algorithmic}
\label{alg:em_euot}
\end{algorithm}

\section{Method} \label{sec:em_alg}
In this section, we propose our scalable and simulation-free algorithm for solving the EUOT problem. In Sec \ref{sec:soc_euot}, we reinterpret our dual form (Eq. \ref{eq:weakot_euot}) through various formulations. Specifically, starting with the stochastic optimal control (SOC) interpretation, we interpret Eq. \ref{eq:weakot_euot} as a bilevel optimization problem, involving the time-dependent value function $V$ and the path measure of $X_{t}^{u}$, $\rho:[0,1]\times \mathcal{X} \rightarrow \mathbb{R}$. In Sec. \ref{sec:em}, we derive our \textbf{Simulation-free EUOT} algorithm by integrating the results from Sec. \ref{sec:soc_euot} and the reciprocal property in Theorem \ref{theorem:dynamic_EUOT}.

\subsection{Stochastic Optimal Control Interpretation of Dual Form of EUOT} \label{sec:soc_euot}
\paragraph{SOC Interpretation of Inner-loop Problem}
First, we investigate the inner optimization with respect to $u$ in the dual form of EUOT (Eq. \ref{eq:weakot_euot}). By applying the SOC interpretation to this inner problem, we extend the value function $V: \mathcal{Y} \rightarrow \mathbb{R}$ to be time-dependent.
The inner optimization of the dual form of EUOT is given as follows:
\begin{align} \label{eq:soc_euot}
    \begin{split}
        \inf_u \mathbb{E} \left[ \int_0^1 \frac{1}{2}\| u(t,X^u_t) \|^2 dt + V (X^u_1) \right].
    \end{split}
\end{align}
This optimization problem can be regarded as a stochastic optimal control (SOC) problem \citep{soc} by regarding $X^u_t$ as the controlled SDE and $V(\cdot)$ as the terminal cost.
Now, let $\Tilde{V}:[0,1]\times \mathcal{X} \rightarrow \mathbb{R}$ be the \textbf{value function} of this problem, i.e.
\begin{equation}
    \Tilde{V}(t, x) = \inf_u \mathbb{E} \left[ \int_t^1 \frac{1}{2} \lVert u_t \rVert^2 dt +V(X^u_1) \,\, \big| \,\, X^u_t = x \right].
\end{equation}
Then, the inner minimization problem (Eq. \ref{eq:soc_euot}) can be reformulated by the HJB equation for this time-dependent value function $\Tilde{V}$ \citep{highdimhjb, hjb}:
\begin{equation} \label{eq:hjb}
    \partial_t \Tilde{V}_t -\frac{1}{2} \| \nabla \Tilde{V}_t \|^2 + \frac{\sigma^2}{2} \Delta \Tilde{V}_t = 0, \quad \Tilde{V}_1 = V \quad \text{ and } \quad u^\star = - \nabla \Tilde{V}.
\end{equation} 
where $u^\star$ denotes the optimal control in Eq. \ref{eq:soc_euot}.
Since $\Tilde{V}_1 := \Tilde{V}(1,\cdot) = V$, we can say $\Tilde{V}$ is an time-dependent extension of $V$.
For simplicity, we will denote the time-dependent value function $\Tilde{V}$ using the same notation as the original value function, specifically $V:[0,1]\times \mathcal{X} \rightarrow \mathbb{R}$.

\paragraph{Reinterpretation of Dual Form}
We begin by interpreting the dual form of EUOT (Eq. \ref{eq:weakot_euot}) as a bilevel optimization problem. \textbf{Our goal is to represent this optimization problem concerning the value function $V$ and the path measure $\rho_{t}$.}
By splitting Eq. \ref{eq:weakot_euot} into inner-minimization and outer-maximization, we arrive at the following optimization problem:
\begin{multline} \label{eq:dual_interpret1}
    \sup_{V\in \Phi_{2,b}} \mathbb{E} \left[ \int^1_0 \frac{1}{2} \Vert u^\star_t (X^u_t) \Vert^2  dt +  V_1(X^u_1) \right] -  \int_{\mathcal{X}} \alpha \Psi^* \left( \frac{V_1(y)}{\alpha} \right) d\nu(y), \\ \text{ s.t. } u^\star = \arg\inf_{u}  \mathbb{E} \left[\int_0^1 \frac{1}{2} \| u_t(X^u_t) \|^2 dt + V_1(X^u_1) \right],    
\end{multline}
where $dX^u_t = u(t,X^u_t) dt + \sigma dW_t$ with $\, X^u_0 \sim \mu$. 
Then, we can rewrite $X_{t}^{u}$ using $\rho_{t} = \text{Law}(X_{t}^{u})$ by the Fokker-Plank equation:
\begin{align} 
    \sup_{V\in \Phi_{2,b}} \int^1_0 \int_{\mathcal{X}} \frac{1}{2} \Vert u^\star_t \Vert^2 \rho^\star_t dx dt + \mathbb{E}_{\hat{y} \sim \rho^\star_1} \left[ V_1(\hat{y}) \right] -  \int_{\mathcal{X}} \alpha \Psi^* \left( \frac{V_1(y)}{\alpha} \right) d\nu(y). \\
    \text{s.t. } (u^\star, \rho^\star) =  \arg\inf_{(u, \rho)} \mathbb{E}_{(t,x)\sim \rho} \left[ \frac{1}{2} \lVert u_t(x) \rVert^2 \right] + \mathbb{E}_{\hat{y}\sim \rho_1}\left[ V_1(\hat{y}) \right]. \label{eq:dual_interpret2}
\end{align}
where $\partial_t \rho + \nabla \cdot (u\rho) - (\sigma^2 / 2) \Delta \rho = 0, \ \rho_0 = \mu$.
Finally, by incorporating the HJB optimality condition (Eq. \ref{eq:hjb}) into (Eq. \ref{eq:dual_interpret2} and using $u^{\star} = - \nabla V$, we can represent the entire bilevel optimization problem above with respect to $V$ and the optimal $\rho_{t}^{\star}$ as follows:
\begin{align} \label{eq:dual_interpret3}
    \begin{split}
    \sup_{V\in \Phi_{2,b}}
    \int^1_0 \int_{\mathcal{X}} \frac{1}{2}  \Vert \nabla V_t \Vert^2 & \rho^\star_t dx dt +
    \mathbb{E}_{\hat{y} \sim \rho^\star_1} \left[ V_1(\hat{y}) \right] -  \int_{\mathcal{X}} \alpha \Psi^* \left( \frac{V_1(y)}{\alpha} \right) d\nu(y). \\
    \text{s.t. } \,\, & \text{(HJB)} \quad \partial_t V_t -\frac{1}{2} \| \nabla V_t \|^2 + \frac{\sigma^2}{2} \Delta V_t = 0 \quad \rho^\star\text{-a.s.}, \\
    &\text{(Fokker-Plank)} \quad \partial_t \rho^\star_t + \nabla \cdot ( -\nabla V_t \rho^\star_t ) - \frac{\sigma^2}{2} \Delta \rho^\star_t = 0,\ \ \rho^\star_0 = \mu..
    \end{split}
\end{align}

Note that we need to derive another optimization problem for the optimal $\rho_{t}^{\star}$ with respect to $V$. This can be achieved by considering the dual form of Eq. \ref{eq:dual_interpret2} as follows:
\begin{equation} \label{eq:dual_interpret4}
    \inf_{\rho} \mathcal{L}_{\rho} \quad \text{ where } \quad \mathcal{L}_{\rho} = \left[ \int^1_0 \int_{\mathcal{X}} \left(\partial_t V_t - \frac{1}{2} \lVert \nabla V_t \rVert^2 + \frac{\sigma^2}{2} \Delta V_t \right) \rho_t dx dt \right].
\end{equation}
For additional details on the derivation Eq. \ref{eq:dual_interpret4}, please refer to Appendix \ref{appen:derivations}.

\subsection{Simulation-free EUOT} \label{sec:em}
In this section, we propose our algorithm for solving the EUOT problem (Eq. \ref{eq:euot}), called \textit{\textbf{Simulation-free EUOT (SF-EUOT)}}. Specifically, we optimize the dynamic value function $V$ and the path measure $\rho_{t}$. Our model is derived from two optimization problems (Eq. \ref{eq:dual_interpret3} and \ref{eq:dual_interpret4}) for $V$ and $\rho_{t}$ presented in Sec \ref{sec:soc_euot}. Additionally, we introduce a simulation-free parametrization of $\rho_{t}$ using the reciprocal property (Theorem \ref{theorem:dynamic_EUOT}), leading to Algorithm \ref{alg:em_euot}. 

\paragraph{Optimization of Value $V$}
We present our loss function for the value function $V$, which is derived from Eq. \ref{eq:dual_interpret3}. Note that the $V$ optimization in Eq. \ref{eq:dual_interpret3} consists of two parts: \textit{the maximization of target functional} and \textit{the HJB optimality conditions}. Therefore, we introduce the following loss function by introducing the HJB condition as a regularization term for the target functional:
\begin{multline} \label{eq:V_objective1}
 \underbrace{\frac{\lambda_D}{\alpha^{p-1}} \int_0^1 \int_{\mathcal{X}} \left\| \partial_t V_t -\frac{1}{2} \| \nabla V_t \|^2 + \frac{\sigma^2}{2} \Delta V_t \right\|^p d\rho_t dt}_{\text{HJB condition}}  \underbrace{\jm{-} \int^1_0 \int_{\mathcal{X}} \frac{1}{2} \Vert \nabla V_t \Vert^2 d\rho_t (x) dt}_{\text{Running cost}} \\ \underbrace{- \int_{\mathcal{X}} V_1 d\rho_1 + \int_{\mathcal{X}} \alpha \Psi^* \left(\frac{V_1}{\alpha}\right) d\nu}_{\text{Terminal cost}},
\end{multline}
where $1\le p \le 2$.
To achieve a simple parametrization, we set $V = \alpha v_\phi$. Then, up to a constant factor, Eq. \ref{eq:V_objective1} can be rewritten as our loss function $\mathcal{L}_{\phi}$ for $V$ as follows:
\begin{multline} \label{eq:V_objective2}
\mathcal{L}_{\phi} = \lambda_D \int_0^1 \int_{\mathcal{X}} \left\| \partial_t v_\phi -\frac{\alpha}{2} \| \nabla v_\phi \|^2 + \frac{\sigma^2}{2} \Delta v_\phi \right\|^p d\rho_t dt \jm{-} \int^1_0 \int_{\mathcal{X}} \frac{\alpha}{2} \lVert v_\phi \rVert^2 d\rho_t dt \\ - \int_{\mathcal{X}} v_\phi(1,\cdot) d\rho_1 + \int_{\mathcal{X}} \Psi^* \left(v_\phi(1,\cdot) \right) d\nu.
\end{multline}

\paragraph{Optimization of Path Measure $\rho_t$}
By introducing the same parametrization $V = \alpha v_\phi$ as $\mathcal{L}_{\phi}$ into Eq. \ref{eq:dual_interpret4}, we also derive the following loss function $\mathcal{L}_{\theta}$ for $\rho_{t}$:
\begin{equation} \label{eq:inner_loop} 
    \mathcal{L}_{\theta} = \inf_{\rho_0 = \mu} \int_0^1 \int_{\mathcal{X}} \left( \partial_t v_\phi -\frac{\alpha}{2} \| \nabla v_\phi \|^2 + \frac{\sigma^2}{2} \Delta v_\phi \right) d\rho_\theta (t,x).
\end{equation}
To minimize Eq. \ref{eq:inner_loop}, it is necessary to obtain the sample $x_t \sim \rho_t$. Here, we exploit the optimality condition for $\rho$, i.e., the reciprocal property, for parametrizing $\rho_{t}$. Specifically, \textbf{we introduce the static generator network $T_{\theta}(x, z)$ for parametrizing $\mathbb{P}^\star_{1|0}(\cdot | x)$ (Eq. \ref{eq:reciprocal_euot})}, where $z \sim \mathcal{N}(0, I)$ is an auxiliary variable. In other words, the conditional transport plan $\pi^\theta (y| x) := \mathbb{P}^\theta_{1|0} (y| x)$ is parametrized as $\pi^\theta (\cdot| x) = T_{\theta}(x,\cdot)_{\#} \mathcal{N}(0, I)$. Then, we can simply obtain $x_t$ by leveraging reciprocal property:
\begin{equation} \label{eq:reciprocal_euot_model}
    x_t = (1-t) x + t \hat{y} + \sigma \sqrt{t(1-t)} \eta, \quad \hat{y} \sim \pi^\theta (\cdot| x), \quad \eta \sim \mathcal{N} (0, I).
\end{equation}

Note that this parametrization also provides an additional benefit. By combining the static generator with the reciprocal property, our model offers \textbf{simulation-free training}. The previous SB models \citep{ipf, imf, enotdiffusion} typically represent the path measure $\rho_{t}$ by parametrizing the drift $u$ with neural network in Eq. \ref{eq:sto_process}. Therefore, sampling from $\rho_{t}$ requires SDE or ODE simulation. However, \textit{our model training does not incur simulation costs because it leverages the reciprocal property.}

\paragraph{Time Discretization}
Following other SB algorithms \citep{ipf, imf, enotdiffusion}, we discretize the time variable $t$. We simply discretize $t$ into uniform intervals of size $\Delta t = 1/N$, where $N$ denotes the number of timesteps. Then, we approximate $\partial_t v_\phi$ using the first-order Euler discretization:
\begin{equation} \label{eq:time_disc}
    \partial_t v_\phi(t,x_t) \approx (v_\phi(t+\Delta t, x_{t+\Delta t})-v_\phi(t, x_{t}))/ \Delta t.
\end{equation}
Moreover, we introduce the discrete probability distribution for $t$ on the set $\{0, \Delta t, 2\Delta t \dots, (N-1)\Delta t \}$ for the Monte-Carlo estimate of $\mathcal{L}_{\phi}$ and $\mathcal{L}_{\theta}$, where $N\Delta t = 1$. The first distribution we consider is the uniform distribution. However, since the terminal cost fluctuates during training, it is advantageous to sample the time variables closer to the terminal time ($t=1$) more frequently. Thus, we also consider $\mathcal{T}(k\Delta t) = \frac{2(k+1)}{N(N+1)}$ for all $k\in \{0, 1, \dots, N-1 \}$. Throughout this paper, we call this distribution as \textit{linear time distribution}.

\section{Experiments}
\label{sec:exp}
In this section, we evaluate our model on various datasets to assess its performance.
In Sec. \ref{sec:exp_image}, we compare our model's scalability on image datasets with other dynamic OT models. 
In Sec. \ref{sec:exp_toy}, we test our model on the synthetic datasets to evaluate whether our model learns the correct EUOT solution.
Note that the goal of this paper is to \textbf{develop a scalable simulation-free EUOT algorithm} (Eq. \ref{eq:dynamic_euot}).
Therefore, in this section, we compare our model with \textit{(1) the dynamical (Entropic) Optimal Transport generative models}, such as Schrödinger Bridge models \citep{ipf, imf, enotdiffusion} and \textit{(2) the dynamic generative models}, such as diffusion models \citep{scoresde, vahdat2021score} and flow matching \citep{lipman2022flow, tong2024improving}.

\begin{figure}
    \begin{minipage}{1\linewidth}
        \captionof{table}{\textbf{Image Generation on CIFAR-10.}}
            \label{tab:compare-cifar10}
    \centering
    \scalebox{0.75}{
        \begin{tabular}{cccc}
                \toprule
                Class & Model &            FID ($\downarrow$) & NFE    \\ 
                 \midrule
                  \multirow{8}{*}{\textbf{Entropic OT (SB)}}
                  &    \textbf{EUOT-Soft} (Ours)                    &   \textbf{3.02} & 1    \\
                  &    \textbf{EUOT-KL} (Ours)                     &   3.08 & 1    \\
                  &    \textbf{EOT} (Ours)                     &   4.05 & 1    \\ \cmidrule{2-4}
                   &    IPF w/o Pretraining \citep{ipf}                  &   $\geq 100$   & - \\
                  &    IPF w/ Pretraining \citep{ipf}           &   \textbf{3.01}   & 200 \\
                  &    IMF w/ Pretraining \citep{imf}         &   4.51   &   100 \\
                  &    \citet{enotdiffusion}  &   $\geq 100$  & -  \\
                  &    WLF  \citep{wlf}  &    $\geq 100$   & - \\
                \midrule
                \multirow{8}{*}{\textbf{Diffusion}}
                  &  NCSN \citep{song2019generative}&    25.3   & 1000    \\
                  &  DDPM \citep{ddpm}&             3.21  & 1000  \\
                  &  Score SDE (VE) \citep{scoresde} &    2.20  & 2000  \\
                  &  Score SDE (VP) \citep{scoresde}&    2.41  & 2000   \\
                  &  DDIM (50 steps) \citep{ddim}&        4.67 & 50  \\
                  &  CLD \citep{dockhorn2021score} &               2.25  & 2000  \\
                  &  Subspace Diffusion \citep{jing2022subspace} &   2.17  & $\geq$ 1000 \\
                  &  LSGM \citep{vahdat2021score}&         \textbf{2.10}  & 138   \\
                \midrule
                \multirow{2}{*}{\textbf{Flow Matching}}
                  &    FM \citep{lipman2022flow}               &   6.35   & 142  \\
                  &    OT-CFM \citep{tong2024improving}     &   \textbf{3.74}   & 1000  \\
                \bottomrule
                \end{tabular}
            }
    \end{minipage}
    \vspace{-10pt}
\end{figure}

\subsection{Scalability and Scratch Trainability on Image Datasets} \label{sec:exp_image}

\paragraph{Scalability Comparison on Generative Modeling with SB Models}
Solving the dynamical (entropic) optimal transport is a challenging problem. \textbf{The previous SB models \citep{ipf, imf, enotdiffusion} exhibit scalability challenges when the distance between the source and target distributions is large.} One example of such a transport problem is generative modeling on an image dataset, where the source is a Gaussian distribution and the target is a high-dimensional image dataset. To address this, the previous SB models \citep{ipf, imf} rely on a pretraining strategy, such as the diffusion model with VP SDE \citep{scoresde}.

In this respect, we evaluate the scalability of our model compared to existing SB models for generative modeling on CIFAR-10 \citep{cifar10}. Tab. \ref{tab:compare-cifar10} presents the results. 
The EUOT-KL refers to the EUOT problem with $D_{\Psi}$=KL divergence and the EUOT-Soft indicates the EUOT problem with $\Psi^{*} = \operatorname{Softplus}$ following \citep{uotm}. For the EOT problem,  $\Psi = \iota$, i.e., $\Psi^{*}(x) = x$ (See Appendix \ref{appen:implementation_details} for details).
All other models failed to converge without pretraining procedure, showing FID scores $\geq 100$. In contrast, \textbf{our model for the EUOT problem achieves state-of-the-art results, demonstrating comparable performance to IPF \citep{ipf} with pretraining.} Our model for the EOT problem also presents an FID score of 4.05, which is a still decent result compared to other models without pretraining. Interestingly, our EUOT model achieves better distribution matching of the target image distribution, i.e. smaller FID, than our EOT model. This result is interesting because, formally, the EUOT problem (Eq. \ref{eq:euot}) allows larger distribution errors in the target data compared to the EOT problem (Eq. \ref{eq:eot}). We interpret this as a result of the additional optimization benefit of Unbalanced Optimal Transport, as observed in \citep{uotm}. 
Moreover, \textbf{our simulation-free parametrization through the reciprocal property (Eq. \ref{eq:reciprocal_euot_model}) offers efficient training and one-step sample generation.} To be more specific, several recently proposed EOT algorithms \citep{imf, enotdiffusion} require more than 10 days of training. In contrast, our model takes approximately 4 days to train on 2 RTX 3090Ti GPUs. 

\begin{figure}
    \begin{minipage}{1\linewidth}
        \centering
        \begin{subfigure}[b]{0.48\textwidth}
            \includegraphics[width=1\textwidth]{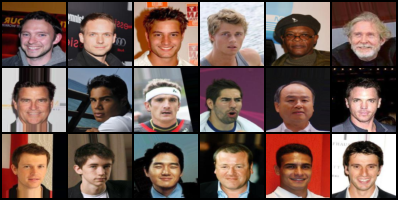}
        \end{subfigure}
        \hfill
        \begin{subfigure}[b]{0.48\textwidth}
            \includegraphics[width=1\textwidth]{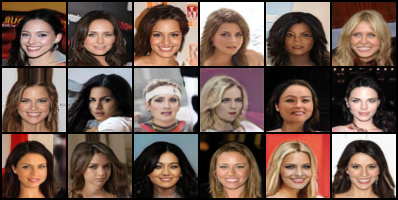}
        \end{subfigure}
        \caption{\textbf{Unpaired Male → Female translation for 64 × 64 CelebA image.}}
        \label{fig:I2I}
    \end{minipage}
    \\
    \begin{minipage}{.56\linewidth}
        \centering
        \captionof{table}{\textbf{FID score for Image-to-image Translation Tasks.}}
        \label{tab:compare-I2I}
        \centering
        \label{tab:I2I}
        \scalebox{.78}{
            \begin{tabular}{c c c}
                \toprule
                Data & Model  &  FID \\
                \midrule 
                \multirow{4}{*}{Male$\rightarrow$Female (64x64)} & DiscoGAN \citep{discoGAN} & 35.64 \\
                & CycleGAN \citep{cyclegan} & 12.94 \\
                &  NOT \citep{not} & 11.96 \\
                \cmidrule{2-3}
                & \textbf{EUOT-Soft} (Ours) & \textbf{8.44} \\
                \midrule
                \multirow{3}{*}{Wild$\rightarrow$Cat (64x64)} & DSBM \citep{imf} & 25$\le$ \\
                & \citet{sb-flow} & 25$\le$ \\
                \cmidrule{2-3}
                & \textbf{EUOT-Soft} (Ours) & \textbf{14.59} \\
                \bottomrule
            \end{tabular}}
    \end{minipage}
    \begin{minipage}{.42\linewidth}
        \centering
        \includegraphics[width=1\textwidth]{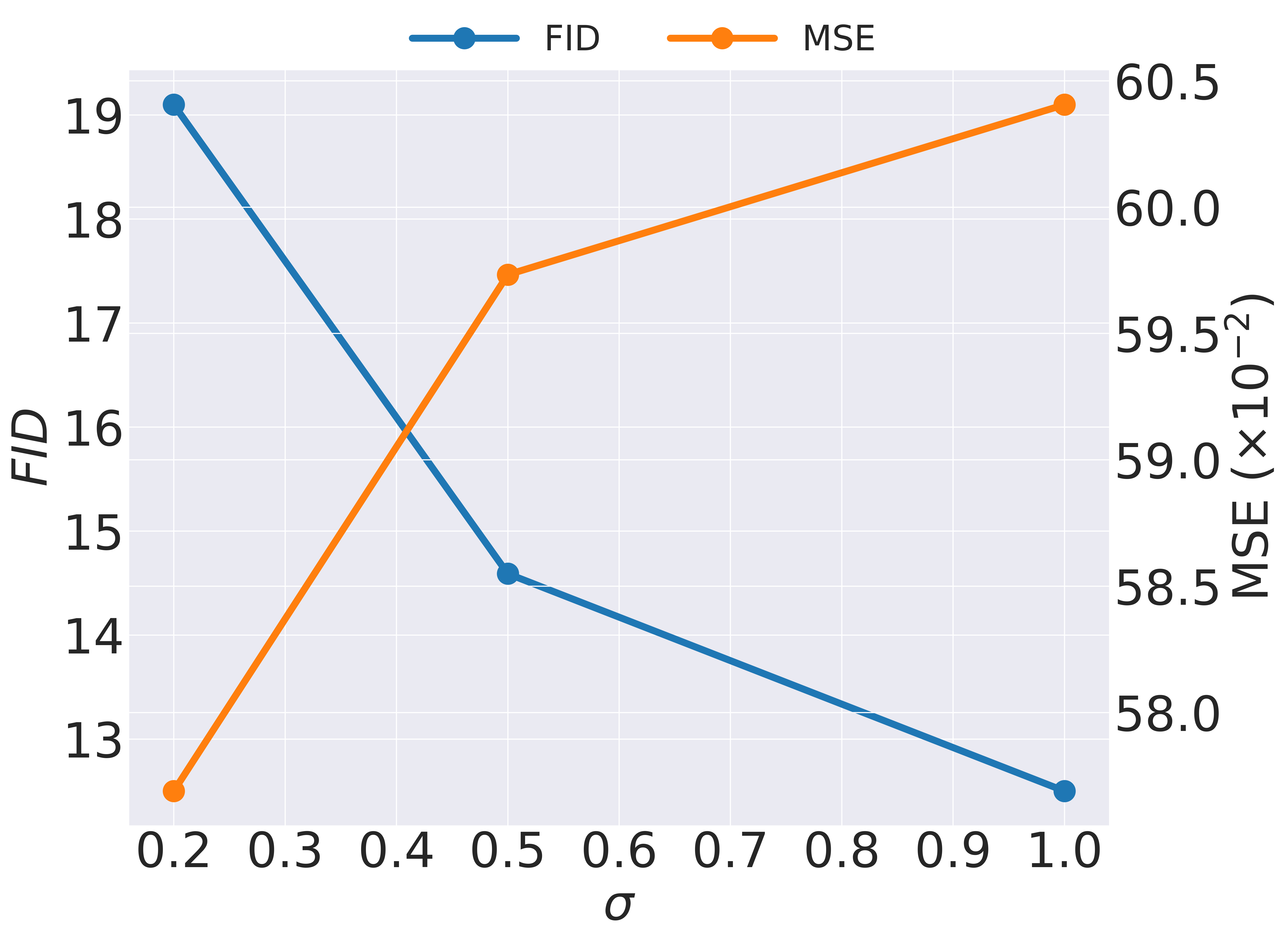}
        \vspace{-20pt}
        \caption{
        \textbf{Effect of Entropic Regularization $\sigma^2$} on Wild$\rightarrow$Cat (64x64).
        }
        \label{fig:abl_sigma}
    \end{minipage}
    \vspace{-15pt}
\end{figure}

\paragraph{Scalability Comparison on Image-to-image Translation}
We evaluated our model on multiple image-to-image (I2I) translation tasks, specifically the \textit{Male $\rightarrow$ Female} \citep{celeba} ($64 \times 64$) and \textit{Wild $\rightarrow$ Cat} \citep{afhq} ($64 \times 64$) benchmarks. Here, we tested our SF-EUOT model for the EUOT-Soft problem, which performed best in generative modeling on CIFAR-10.  Tab. \ref{tab:compare-I2I} provides the FID scores for these I2I translations tasks.
As demonstrated in Tab. \ref{tab:compare-I2I}, our model outperformed other adversarial methods, including other OT-based method of NOT \citep{not}, on the Male $\rightarrow$ Female dataset. Moreover, our model outperformed the dynamic EOT-based methods, such as DSBM \citep{imf}, on the Wild $\rightarrow$ Cat dataset.
These results demonstrate that our model achieves superior scalability in I2I translation tasks, comapred to other OT-based approaches. 
Moreover, Figure \ref{fig:I2I} illustrates the translated samples on the Male $\rightarrow$ Female dataset. Our model shows strong qualitative performance, faithfully transporting the faces and backgrounds. The additional qualitative results are provided in Appendix \ref{appen:results}.

\paragraph{Effect of Entropic Regularization $\sigma^2$}
We performed an ablation study on the entropic regularization intensity $\sigma^2$ of the EUOT problem (Eq. \ref{eq:euot}) in the image-to-image translation task on the Wild $\rightarrow$ Cat \citep{afhq} ($64 \times 64$) dataset. Here, $\sigma$ represents the noise level of the transport dynamics (Eq. \ref{eq:sto_process}). Fig. \ref{fig:abl_sigma} presents the results. As we increase $\sigma$, we observe a consistent trend of decreasing FID scores and increasing transport costs (MSE). The decrease in FID indicates that, with higher $\sigma$, our model better approximates the target distribution. The increase in transport cost (MSE) aligns with our intuition, as $\sigma$ introduces stochasticity into the model. When $\sigma$ becomes too large (e.g., $\sigma = 1$), the transported image may not align well with the identity of the source sample, leading to an increase in transport cost. Conversely, the decrease in FID can be understood as the model is more effectively mapping across diverse images.

\begin{figure}[t]
    \centering
    \begin{minipage}{1\linewidth}
        \centering
        \captionof{table}{\textbf{Comparison on Benchmarks on High Dimensional Gaussian Experiments.} $\Delta m$, $\Delta Var$, and $\Delta Cov$ stands for difference of mean, variance, and covariance, respectively. $m$, $Var$, and $Cov$ stands for ground true mean, variance and covariance, respectively.}
        \vspace{-5pt}
        \scalebox{0.8}{
        \begin{tabular}{cccccccc}    
        \toprule
        Metric ($\%$) $\downarrow$  &  DSB & IMF-b & DSBM-IPF & DSBM-IMF & RF & Ours \\
        \midrule
        $\Delta m / m$     & 9.3   & 0.9	    & 2.3	& 3.5   &0.4	& 2.9 \\
        $\Delta Var / Var$ & 1.22	&$\geq$ 10	&0.14	&0.17	&8.29   &	3.00 \\
        $\Delta Cov / Cov$ & 7.88	&$\geq$ 10	& 1.63	& 1.52	&-	    & 3.72 \\
       \bottomrule
    \end{tabular}}
    \label{tab:high-Gaussian}
    \end{minipage}
    \\
    \vspace{10pt}
    \begin{minipage}{1\linewidth}
        \captionsetup[subfigure]{aboveskip=0pt,belowskip=-1pt}
        \centering
        \begin{subfigure}[b]{0.24\textwidth}
            \includegraphics[width=1\textwidth]{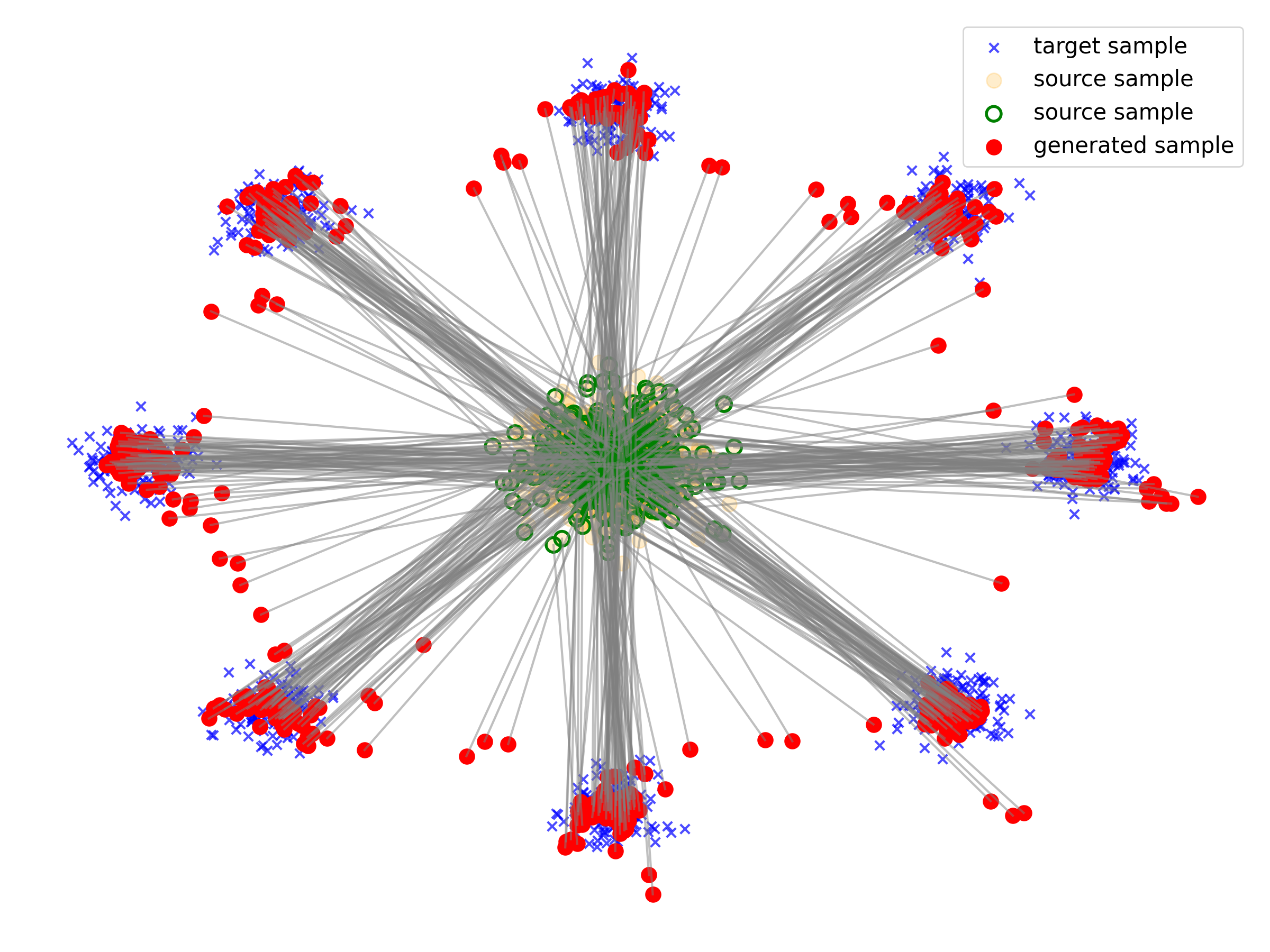}
            \caption{(G $\rightarrow$ 8G) - Ours}
            \label{fig:generated_8gaussian}
        \end{subfigure}
        \begin{subfigure}[b]{0.24\textwidth}
            \includegraphics[width=1\textwidth]{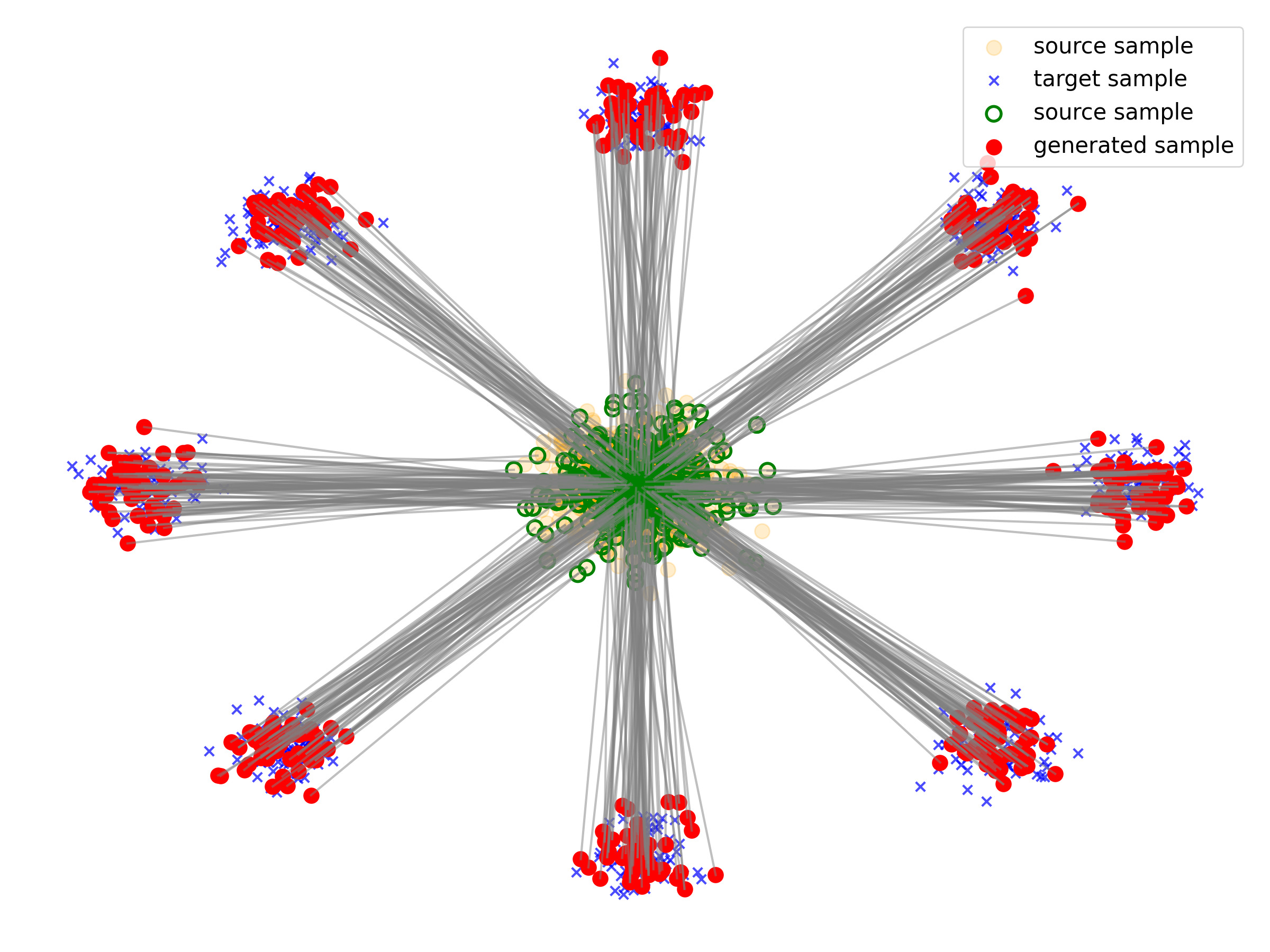}
            \caption{(G $\rightarrow$ 8G) - GT}
            \label{fig:real_8gaussian}
        \end{subfigure}
        
        \centering
        \begin{subfigure}[b]{0.24\textwidth}
            \includegraphics[width=1\textwidth]{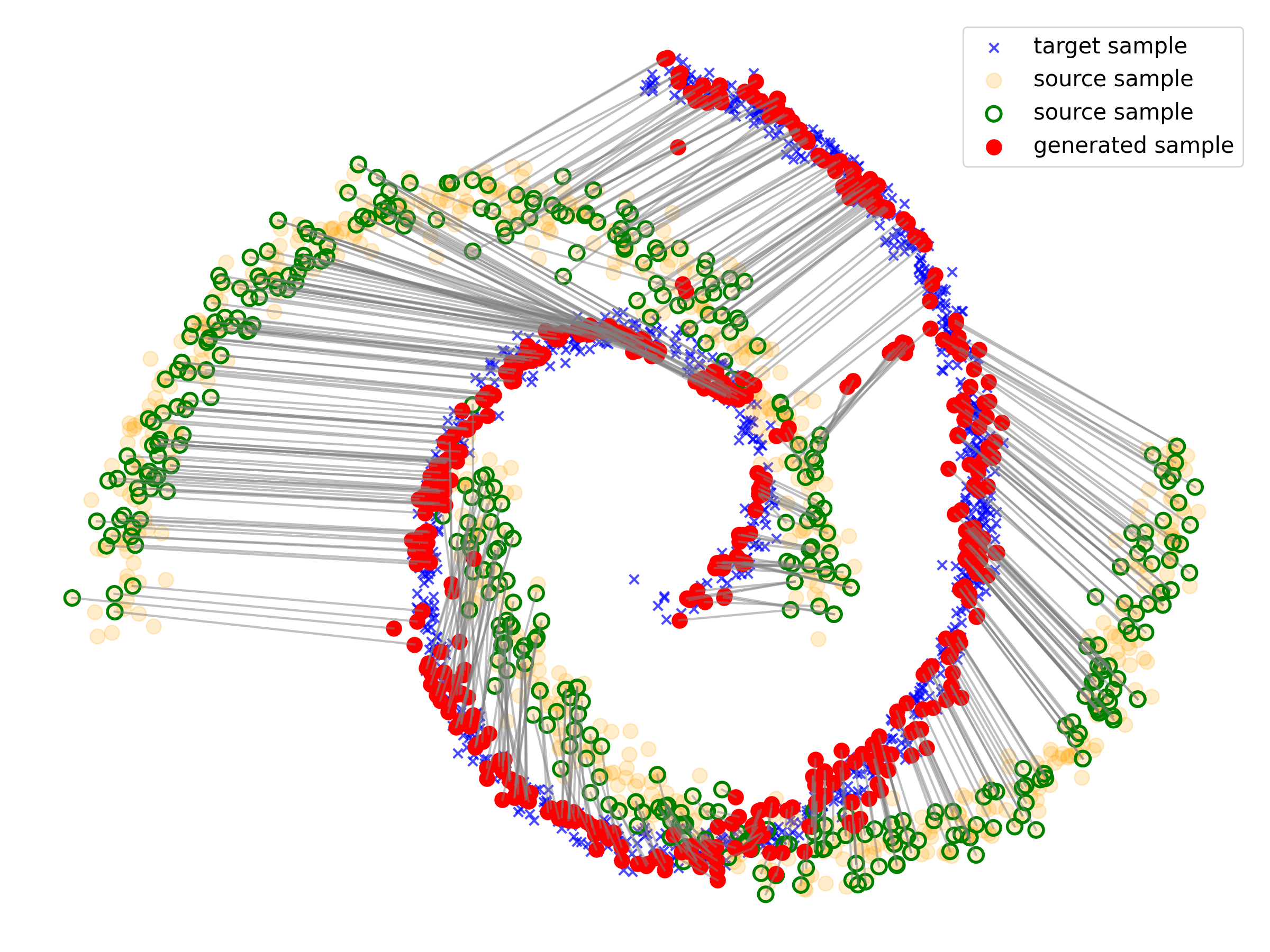}
            \caption{(M $\rightarrow$ S) - Ours}
            \label{fig:generated_moon2spiral}
        \end{subfigure}
        \begin{subfigure}[b]{0.24\textwidth}
            \includegraphics[width=1\textwidth]{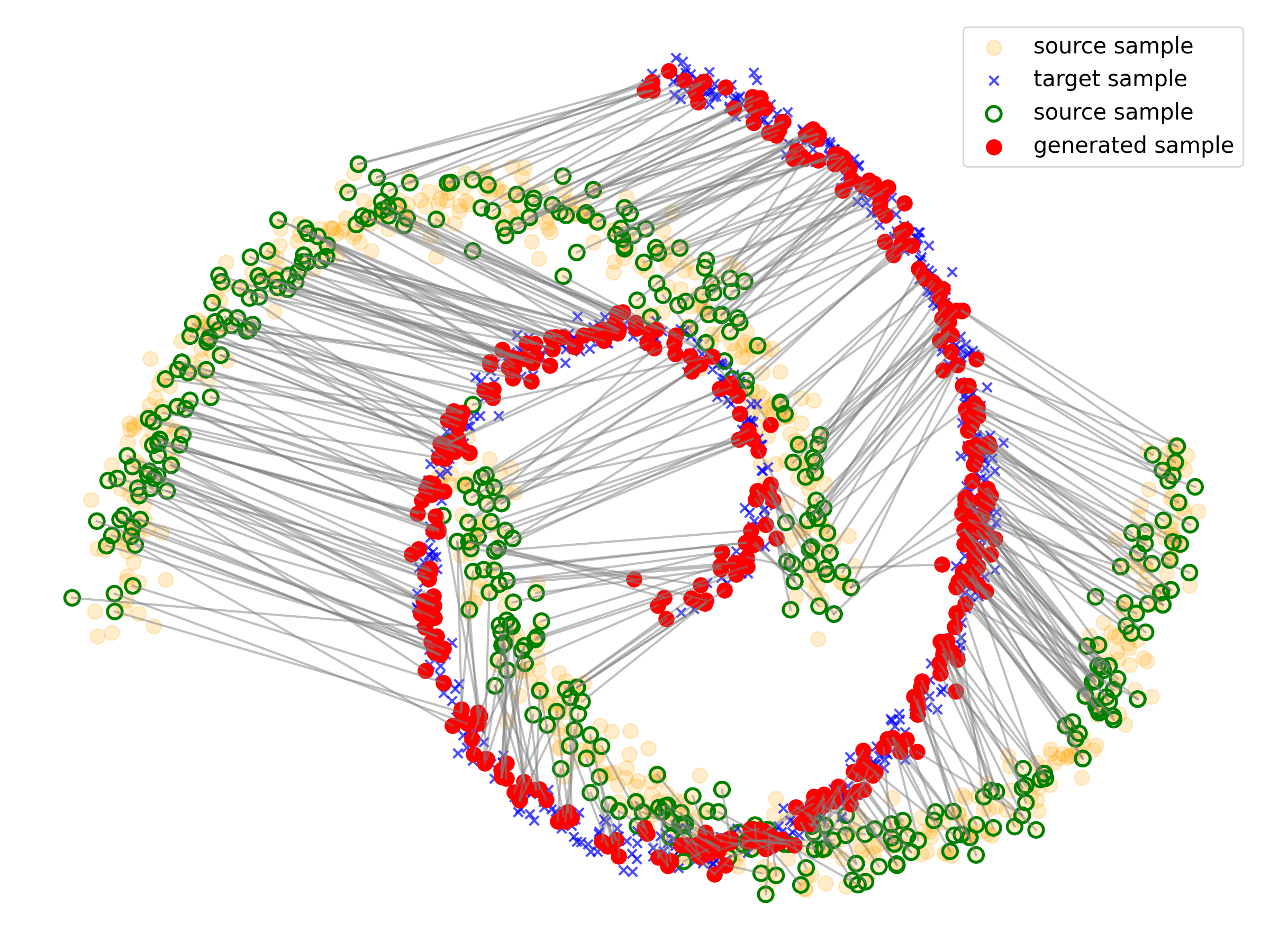}
            \caption{(M $\rightarrow$ S) - GT}
            \label{fig:real_moon2spiral}
        \end{subfigure}
        \vspace{-5pt}
      \caption{\textbf{Comparison of EOT Transport Plan $\boldsymbol{\pi}$ between Our model and Discrete Ground-Truth} from POT \citep{pot} when $\sigma = 0.75$. The gray lines illustrate the generated pairs, i.e., the connecting lines between $x$ (green) and $\pi(y | x)$ (red).
      }
        \label{fig:2d}
    \end{minipage}
    \vspace{-15pt}
\end{figure}

\subsection{Comparison to EOT Solution on Synthetic Dataset} \label{sec:exp_toy}
\vspace{-3pt}
\paragraph{Qualitative Results on 2D Toy Datasets} We evaluate whether \textbf{our model can learn the ground-truth solution of the EOT (EUOT with $\Psi=\iota$) problem.} Specifically, we compare the trained static coupling $\pi_{\theta}$ (Eq. \ref{eq:euot}) with the proxy ground-truth coupling obtained using the convex OT solver in the POT library \citep{pot}. 
Note that the POT library provides the solution between two \textit{empirical discrete measures} derived from the training data, while our goal is to solve the EUOT problem between two \textit{continuous measures} $\mu, \nu$. 

Fig. \ref{fig:2d} presents the results on two datasets: \textit{Gaussian to 8-Gaussian (G $\rightarrow$ 8G)} and \textit{Moon to Spiral (M $\rightarrow$ S)}. (See Appendix \ref{appen:implementation_details} for implementation details). 
As shown in Fig. \ref{fig:2d}, our model exhibits a decent performance in learning the EUOT transport plan, successfully generating the target distribution and providing a rough approximation of the optimal transport map for individual source samples. However, some noisy outputs are observed, and the model does not fully approximate a precisely optimal transport plan.
We interpret this phenomenon in terms of the difficulty in training through PDE-like learning objectives \citep{pinn, pinn2}. In this respect, we believe there is room for development in imposing optimal conditions for the path measure $\rho$. 
Nevertheless, Sec. \ref{sec:exp_image} demonstrates that our model presents considerable scalability enhancement over previous models.

\vspace{-5pt}
\paragraph{Quantitative Results on High-Dimensional Gaussian Dataset}
We conduct a quantitative evaluation to test whether our model can accurately learn the optimal coupling. For this evaluation, we exploit the closed form solution for the Entropic Optimal Transport (EOT) problem in the Gaussian-to-Gaussian case \citep{sb_gtsol}. There are two reasons for conduting evaluation on the EOT problem, not on the EUOT problem. First, the EOT problem between Gaussians has a closed-form solution for the optimal transport coupling, while there is no such closed form solution for the EUOT problem. Second, several methods for the Schr\"{o}dinger Bridge problem, i.e., the dynamic EOT problem, conducted same benchmark, allowing us to provide broader comparisons with other methods. We follow the experimental settings and evaluation metric of \citep{imf}.

Tab. \ref{tab:high-Gaussian} shows the quantitative  evaluation results in terms for three metrics: the relative error of the mean $\Delta m / m$, variance $\Delta Var / Var$, and covariance $\Delta Cov / Cov$ compared to the ground-truth mean, variance, and covariance, respectively. Note that the relative errors for the mean and variance assess the distribution error between the generated distribution $T_\#\mu$ and the ground-truth target measure $\nu$. The relative error of covariance evaluates whether our model learned optimal coupling. Overall, our model shows comparable or slightly worse performance than other models. However, since the relative error is below 4\% in all three metrics, our model can be considered to exhibit moderate performance. We hypothesize that the adversarial training method and minimizing the PDE-like objective for $T_\theta$ resulted in this rough approximation of the optimal coupling. Nevertheless, we would like to emphasize that the strength of our methodology lies in developing a scalable algorithm that transports a source sample into the target with a single-step evaluation. In Sec. \ref{sec:exp_image}, we discuss further the scalability and efficiency of our proposed method.

\vspace{-4pt}
\section{Conclusion}
\vspace{-5pt}
In this paper, we propose an algorithm for solving the Entropic Unbalanced Optimal Transport (EUOT) problem. We derived the dynamical formulation of EUOT. Then, we established the dual form and analyzed this dual form from the stochastic optimal control perspective. 
Our model is based on the simulation-free algorithm leveraging the reciprocal property of the dynamical formulation of EUOT problem. 
Our experiments demonstrated that our model addresses the scalability challenges of previous Schrödinger Bridge models. Specifically, our model offers simulation-free training and achieves state-of-the-art results in generative modeling on CIFAR-10 without diffusion model pretraining.
A limitation of this work is that our method demonstrates lower accuracy in learning the EUOT compared to other models. We hypothesize that this is due to the inherent difficulty of achieving precise matching using a PINN-style loss function. Additionally, due to computational resource constraints, we were unable to test our model on high-resolution datasets such as CelebA-HQ \citep{celeba} ($256 \times 256$).

\section*{Ethics Statement} 
Our approach significantly enhances the scalability of EOT algorithms, enabling the generation of high-quality samples from large-scale datasets while maintaining an accurate representation of the data distribution. As a result, we expect our model to impact various fields, including image transfer, finance, image synthesis, healthcare, and anomaly detection.
However, it is important to recognize the potential negative societal implications of our work. Generative models can unintentionally learn and magnify existing biases within the data, which may reinforce societal biases. Therefore, careful monitoring and control are crucial when deploying these models in real-world applications. Rigorous management of both the training data and the modeling process is essential to mitigate any potential negative societal effects.

\section*{Reproducibility Statement}
To ensure the reproducibility of our work, we submitted the anonymized source in the supplementary material, provided complete proofs of our theoretical results in Appendix \ref{appen:proof_derivation}, and included the implementation and experiment details in Appendix \ref{appen:implementation_details}.

\newpage
\bibliography{iclr2025_conference}

\begin{thebibliography}{62}
\providecommand{\natexlab}[1]{#1}
\providecommand{\url}[1]{\texttt{#1}}
\expandafter\ifx\csname urlstyle\endcsname\relax
  \providecommand{\doi}[1]{doi: #1}\else
  \providecommand{\doi}{doi: \begingroup \urlstyle{rm}\Url}\fi

\bibitem[Backhoff-Veraguas et~al.(2019)Backhoff-Veraguas, Beiglb{\"o}ck, and Pammer]{weakOT}
Julio Backhoff-Veraguas, Mathias Beiglb{\"o}ck, and Gudmun Pammer.
\newblock Existence, duality, and cyclical monotonicity for weak transport costs.
\newblock \emph{Calculus of Variations and Partial Differential Equations}, 58\penalty0 (6):\penalty0 203, 2019.

\bibitem[Balaji et~al.(2020)Balaji, Chellappa, and Feizi]{robust-ot}
Yogesh Balaji, Rama Chellappa, and Soheil Feizi.
\newblock Robust optimal transport with applications in generative modeling and domain adaptation.
\newblock \emph{Advances in Neural Information Processing Systems}, 33:\penalty0 12934--12944, 2020.

\bibitem[Bortoli et~al.(2021)Bortoli, Thornton, Heng, and Doucet]{bortoli2021diffusion}
Valentin~De Bortoli, James Thornton, Jeremy Heng, and Arnaud Doucet.
\newblock Diffusion schr\"odinger bridge with applications to score-based generative modeling.
\newblock In A.~Beygelzimer, Y.~Dauphin, P.~Liang, and J.~Wortman Vaughan (eds.), \emph{Advances in Neural Information Processing Systems}, 2021.
\newblock URL \url{https://openreview.net/forum?id=9BnCwiXB0ty}.

\bibitem[Bunne et~al.(2022)Bunne, Papaxanthos, Krause, and Cuturi]{population}
Charlotte Bunne, Laetitia Papaxanthos, Andreas Krause, and Marco Cuturi.
\newblock Proximal optimal transport modeling of population dynamics.
\newblock In \emph{International Conference on Artificial Intelligence and Statistics}, pp.\  6511--6528. PMLR, 2022.

\bibitem[Bunne et~al.(2023{\natexlab{a}})Bunne, Hsieh, Cuturi, and Krause]{sb_gtsol}
Charlotte Bunne, Ya-Ping Hsieh, Marco Cuturi, and Andreas Krause.
\newblock The schr{\"o}dinger bridge between gaussian measures has a closed form.
\newblock In \emph{International Conference on Artificial Intelligence and Statistics}, pp.\  5802--5833. PMLR, 2023{\natexlab{a}}.

\bibitem[Bunne et~al.(2023{\natexlab{b}})Bunne, Stark, Gut, Del~Castillo, Levesque, Lehmann, Pelkmans, Krause, and R{\"a}tsch]{cellot}
Charlotte Bunne, Stefan~G Stark, Gabriele Gut, Jacobo~Sarabia Del~Castillo, Mitch Levesque, Kjong-Van Lehmann, Lucas Pelkmans, Andreas Krause, and Gunnar R{\"a}tsch.
\newblock Learning single-cell perturbation responses using neural optimal transport.
\newblock \emph{Nature Methods}, 20\penalty0 (11):\penalty0 1759--1768, 2023{\natexlab{b}}.

\bibitem[Chen et~al.(2021{\natexlab{a}})Chen, Liu, and Theodorou]{ipf}
Tianrong Chen, Guan-Horng Liu, and Evangelos Theodorou.
\newblock Likelihood training of schr{\"o}dinger bridge using forward-backward sdes theory.
\newblock In \emph{International Conference on Learning Representations}, 2021{\natexlab{a}}.

\bibitem[Chen et~al.(2021{\natexlab{b}})Chen, Georgiou, and Pavon]{sb1}
Yongxin Chen, Tryphon~T Georgiou, and Michele Pavon.
\newblock Stochastic control liaisons: Richard sinkhorn meets gaspard monge on a schrodinger bridge.
\newblock \emph{Siam Review}, 63\penalty0 (2):\penalty0 249--313, 2021{\natexlab{b}}.

\bibitem[Choi et~al.(2023{\natexlab{a}})Choi, Choi, and Kang]{uotm}
Jaemoo Choi, Jaewoong Choi, and Myungjoo Kang.
\newblock Generative modeling through the semi-dual formulation of unbalanced optimal transport.
\newblock In \emph{Thirty-seventh Conference on Neural Information Processing Systems}, 2023{\natexlab{a}}.

\bibitem[Choi et~al.(2023{\natexlab{b}})Choi, Park, and Kang]{rgm}
Jaemoo Choi, Yesom Park, and Myungjoo Kang.
\newblock Restoration based generative models.
\newblock In \emph{Proceedings of the 40th International Conference on Machine Learning}, volume 202. PMLR, 2023{\natexlab{b}}.

\bibitem[Choi et~al.(2024)Choi, Choi, and Kang]{uotmsd}
Jaemoo Choi, Jaewoong Choi, and Myungjoo Kang.
\newblock Analyzing and improving optimal-transport-based adversarial networks.
\newblock In \emph{The Twelfth International Conference on Learning Representations}, 2024.

\bibitem[Choi et~al.(2020)Choi, Uh, Yoo, and Ha]{afhq}
Yunjey Choi, Youngjung Uh, Jaejun Yoo, and Jung-Woo Ha.
\newblock Stargan v2: Diverse image synthesis for multiple domains.
\newblock In \emph{Proceedings of the IEEE/CVF conference on computer vision and pattern recognition}, pp.\  8188--8197, 2020.

\bibitem[Cuturi(2013)]{EOT}
Marco Cuturi.
\newblock Sinkhorn distances: Lightspeed computation of optimal transport.
\newblock \emph{Advances in neural information processing systems}, 26, 2013.

\bibitem[De~Bortoli et~al.(2024)De~Bortoli, Korshunova, Mnih, and Doucet]{sb-flow}
Valentin De~Bortoli, Iryna Korshunova, Andriy Mnih, and Arnaud Doucet.
\newblock Schr$\backslash$" odinger bridge flow for unpaired data translation.
\newblock \emph{arXiv preprint arXiv:2409.09347}, 2024.

\bibitem[Dockhorn et~al.(2022)Dockhorn, Vahdat, and Kreis]{dockhorn2021score}
Tim Dockhorn, Arash Vahdat, and Karsten Kreis.
\newblock Score-based generative modeling with critically-damped langevin diffusion.
\newblock \emph{The International Conference on Learning Representations}, 2022.

\bibitem[Eyring et~al.(2024)Eyring, Klein, Uscidda, Palla, Kilbertus, Akata, and Theis]{eyring2024unbalancedness}
Luca Eyring, Dominik Klein, Th{\'e}o Uscidda, Giovanni Palla, Niki Kilbertus, Zeynep Akata, and Fabian~J Theis.
\newblock Unbalancedness in neural monge maps improves unpaired domain translation.
\newblock In \emph{The Twelfth International Conference on Learning Representations}, 2024.

\bibitem[Finlay et~al.(2020)Finlay, Gerolin, Oberman, and Pooladian]{sb_toy2}
Chris Finlay, Augusto Gerolin, Adam~M Oberman, and Aram-Alexandre Pooladian.
\newblock Learning normalizing flows from entropy-kantorovich potentials.
\newblock \emph{arXiv preprint arXiv:2006.06033}, 2020.

\bibitem[Flamary et~al.(2021)Flamary, Courty, Gramfort, Alaya, Boisbunon, Chambon, Chapel, Corenflos, Fatras, Fournier, Gautheron, Gayraud, Janati, Rakotomamonjy, Redko, Rolet, Schutz, Seguy, Sutherland, Tavenard, Tong, and Vayer]{pot}
R{\'e}mi Flamary, Nicolas Courty, Alexandre Gramfort, Mokhtar~Z. Alaya, Aur{\'e}lie Boisbunon, Stanislas Chambon, Laetitia Chapel, Adrien Corenflos, Kilian Fatras, Nemo Fournier, L{\'e}o Gautheron, Nathalie~T.H. Gayraud, Hicham Janati, Alain Rakotomamonjy, Ievgen Redko, Antoine Rolet, Antony Schutz, Vivien Seguy, Danica~J. Sutherland, Romain Tavenard, Alexander Tong, and Titouan Vayer.
\newblock Pot: Python optimal transport.
\newblock \emph{Journal of Machine Learning Research}, 22\penalty0 (78):\penalty0 1--8, 2021.
\newblock URL \url{http://jmlr.org/papers/v22/20-451.html}.

\bibitem[Fleming \& Rishel(2012)Fleming and Rishel]{soc}
Wendell~H. Fleming and Raymond~W. Rishel.
\newblock \emph{Deterministic and stochastic optimal control}, volume~1.
\newblock Springer, 2012.

\bibitem[Gazdieva et~al.(2023)Gazdieva, Asadulaev, Korotin, and Burnaev]{lightUOT}
Milena Gazdieva, Arip Asadulaev, Alexander Korotin, and Evgeny Burnaev.
\newblock Unbalanced and light optimal transport.
\newblock \emph{arXiv preprint arXiv:2303.07988}, 2023.

\bibitem[Genevay(2019)]{eot_dual}
Aude Genevay.
\newblock \emph{{Entropy-Regularized Optimal Transport for Machine Learning}}.
\newblock Theses, {PSL University}, March 2019.
\newblock URL \url{https://theses.hal.science/tel-02319318}.

\bibitem[Gozlan et~al.(2017)Gozlan, Roberto, Samson, and Tetali]{proof1}
Nathael Gozlan, Cyril Roberto, Paul-Marie Samson, and Prasad Tetali.
\newblock Kantorovich duality for general transport costs and applications.
\newblock \emph{Journal of Functional Analysis}, 273\penalty0 (11):\penalty0 3327--3405, 2017.

\bibitem[Gushchin et~al.(2024)Gushchin, Kolesov, Korotin, Vetrov, and Burnaev]{enotdiffusion}
Nikita Gushchin, Alexander Kolesov, Alexander Korotin, Dmitry~P Vetrov, and Evgeny Burnaev.
\newblock Entropic neural optimal transport via diffusion processes.
\newblock \emph{Advances in Neural Information Processing Systems}, 36, 2024.

\bibitem[Ho et~al.(2020)Ho, Jain, and Abbeel]{ddpm}
Jonathan Ho, Ajay Jain, and Pieter Abbeel.
\newblock Denoising diffusion probabilistic models.
\newblock \emph{Advances in Neural Information Processing Systems}, 33:\penalty0 6840--6851, 2020.

\bibitem[Hutchinson(1989)]{hutchinson}
Michael~F Hutchinson.
\newblock A stochastic estimator of the trace of the influence matrix for laplacian smoothing splines.
\newblock \emph{Communications in Statistics-Simulation and Computation}, 18\penalty0 (3):\penalty0 1059--1076, 1989.

\bibitem[Jing et~al.(2022)Jing, Corso, Berlinghieri, and Jaakkola]{jing2022subspace}
Bowen Jing, Gabriele Corso, Renato Berlinghieri, and Tommi Jaakkola.
\newblock Subspace diffusion generative models.
\newblock \emph{arXiv preprint arXiv:2205.01490}, 2022.

\bibitem[Jordan et~al.(1998)Jordan, Kinderlehrer, and Otto]{jko}
Richard Jordan, David Kinderlehrer, and Felix Otto.
\newblock The variational formulation of the fokker--planck equation.
\newblock \emph{SIAM journal on mathematical analysis}, 29\penalty0 (1):\penalty0 1--17, 1998.

\bibitem[Kantorovich(1948)]{Kantorovich1948}
Leonid~Vitalevich Kantorovich.
\newblock On a problem of monge.
\newblock \emph{Uspekhi Mat. Nauk}, pp.\  225--226, 1948.

\bibitem[Karras et~al.(2019)Karras, Laine, and Aila]{stylegan}
Tero Karras, Samuli Laine, and Timo Aila.
\newblock A style-based generator architecture for generative adversarial networks.
\newblock In \emph{Proceedings of the IEEE/CVF conference on computer vision and pattern recognition}, pp.\  4401--4410, 2019.

\bibitem[Kim et~al.(2024)Kim, Kwon, Kim, and Ye]{unsb}
Beomsu Kim, Gihyun Kwon, Kwanyoung Kim, and Jong~Chul Ye.
\newblock Unpaired image-to-image translation via neural schr\"odinger bridge.
\newblock In \emph{The Twelfth International Conference on Learning Representations}, 2024.
\newblock URL \url{https://openreview.net/forum?id=uQBW7ELXfO}.

\bibitem[Kim et~al.(2017)Kim, Cha, Kim, Lee, and Kim]{discoGAN}
Taeksoo Kim, Moonsu Cha, Hyunsoo Kim, Jung~Kwon Lee, and Jiwon Kim.
\newblock Learning to discover cross-domain relations with generative adversarial networks.
\newblock In \emph{International conference on machine learning}, pp.\  1857--1865. PMLR, 2017.

\bibitem[Klein et~al.(2023)Klein, Uscidda, Theis, and Cuturi]{sinkhorn2}
Dominik Klein, Th{\'e}o Uscidda, Fabian Theis, and Marco Cuturi.
\newblock Generative entropic neural optimal transport to map within and across spaces.
\newblock \emph{arXiv preprint arXiv:2310.09254}, 2023.

\bibitem[Korotin et~al.(2023)Korotin, Selikhanovych, and Burnaev]{not}
Alexander Korotin, Daniil Selikhanovych, and Evgeny Burnaev.
\newblock Neural optimal transport.
\newblock In \emph{The Eleventh International Conference on Learning Representations}, 2023.

\bibitem[Krizhevsky et~al.(2009)Krizhevsky, Hinton, et~al.]{cifar10}
Alex Krizhevsky, Geoffrey Hinton, et~al.
\newblock Learning multiple layers of features from tiny images.
\newblock 2009.

\bibitem[L{\'e}onard(2013)]{sb2}
Christian L{\'e}onard.
\newblock A survey of the schr$\backslash$" odinger problem and some of its connections with optimal transport.
\newblock \emph{arXiv preprint arXiv:1308.0215}, 2013.

\bibitem[L{\'e}onard et~al.(2014)L{\'e}onard, R{\oe}lly, and Zambrini]{reciprocal}
Christian L{\'e}onard, Sylvie R{\oe}lly, and Jean-Claude Zambrini.
\newblock Reciprocal processes. a measure-theoretical point of view.
\newblock \emph{Probability Surveys}, 11:\penalty0 237--269, 2014.

\bibitem[Li et~al.(2024)Li, Zheng, Kovachki, Jin, Chen, Liu, Azizzadenesheli, and Anandkumar]{pinn}
Zongyi Li, Hongkai Zheng, Nikola Kovachki, David Jin, Haoxuan Chen, Burigede Liu, Kamyar Azizzadenesheli, and Anima Anandkumar.
\newblock Physics-informed neural operator for learning partial differential equations.
\newblock \emph{ACM / IMS J. Data Sci.}, 1\penalty0 (3), may 2024.
\newblock \doi{10.1145/3648506}.
\newblock URL \url{https://doi.org/10.1145/3648506}.

\bibitem[Lipman et~al.(2023)Lipman, Chen, Ben-Hamu, Nickel, and Le]{lipman2022flow}
Yaron Lipman, Ricky T.~Q. Chen, Heli Ben-Hamu, Maximilian Nickel, and Matthew Le.
\newblock Flow matching for generative modeling.
\newblock In \emph{The Eleventh International Conference on Learning Representations}, 2023.
\newblock URL \url{https://openreview.net/forum?id=PqvMRDCJT9t}.

\bibitem[Liu et~al.(2024)Liu, Lipman, Nickel, Karrer, Theodorou, and Chen]{gsbm}
Guan-Horng Liu, Yaron Lipman, Maximilian Nickel, Brian Karrer, Evangelos Theodorou, and Ricky~TQ Chen.
\newblock Generalized schr{\"o}dinger bridge matching.
\newblock In \emph{The Twelfth International Conference on Learning Representations}, 2024.

\bibitem[Liu et~al.(2015)Liu, Luo, Wang, and Tang]{celeba}
Ziwei Liu, Ping Luo, Xiaogang Wang, and Xiaoou Tang.
\newblock Deep learning face attributes in the wild.
\newblock In \emph{Proceedings of the IEEE international conference on computer vision}, pp.\  3730--3738, 2015.

\bibitem[Luo \& Zhou(2023)Luo and Zhou]{pinn2}
Tao Luo and Qixuan Zhou.
\newblock On residual minimization for pdes: Failure of pinn, modified equation, and implicit bias, 2023.

\bibitem[Neklyudov et~al.(2023)Neklyudov, Brekelmans, Tong, Atanackovic, Liu, and Makhzani]{wlf}
Kirill Neklyudov, Rob Brekelmans, Alexander Tong, Lazar Atanackovic, Qiang Liu, and Alireza Makhzani.
\newblock A computational framework for solving wasserstein lagrangian flows.
\newblock \emph{arXiv preprint arXiv:2310.10649}, 2023.

\bibitem[N{\"u}sken \& Richter(2021)N{\"u}sken and Richter]{highdimhjb}
Nikolas N{\"u}sken and Lorenz Richter.
\newblock Solving high-dimensional hamilton--jacobi--bellman pdes using neural networks: perspectives from the theory of controlled diffusions and measures on path space.
\newblock \emph{Partial differential equations and applications}, 2\penalty0 (4):\penalty0 48, 2021.

\bibitem[Peyr{\'e} et~al.(2019)Peyr{\'e}, Cuturi, et~al.]{ComputationalOT}
Gabriel Peyr{\'e}, Marco Cuturi, et~al.
\newblock Computational optimal transport: With applications to data science.
\newblock \emph{Foundations and Trends{\textregistered} in Machine Learning}, 11\penalty0 (5-6):\penalty0 355--607, 2019.

\bibitem[Risken \& Risken(1996)Risken and Risken]{fokker-planck}
Hannes Risken and Hannes Risken.
\newblock \emph{Fokker-planck equation}.
\newblock Springer, 1996.

\bibitem[Rout et~al.(2022)Rout, Korotin, and Burnaev]{otm}
Litu Rout, Alexander Korotin, and Evgeny Burnaev.
\newblock Generative modeling with optimal transport maps.
\newblock In \emph{International Conference on Learning Representations}, 2022.

\bibitem[S{\"a}rkk{\"a} \& Solin(2019)S{\"a}rkk{\"a} and Solin]{girsanov}
Simo S{\"a}rkk{\"a} and Arno Solin.
\newblock \emph{Applied Stochastic Differential Equations}, volume~10.
\newblock Cambridge University Press, 2019.

\bibitem[S{\'e}journ{\'e} et~al.(2022)S{\'e}journ{\'e}, Peyr{\'e}, and Vialard]{uot-robust}
Thibault S{\'e}journ{\'e}, Gabriel Peyr{\'e}, and Fran{\c{c}}ois-Xavier Vialard.
\newblock Unbalanced optimal transport, from theory to numerics.
\newblock \emph{arXiv preprint arXiv:2211.08775}, 2022.

\bibitem[Shi et~al.(2024)Shi, De~Bortoli, Campbell, and Doucet]{imf}
Yuyang Shi, Valentin De~Bortoli, Andrew Campbell, and Arnaud Doucet.
\newblock Diffusion schr{\"o}dinger bridge matching.
\newblock \emph{Advances in Neural Information Processing Systems}, 36, 2024.

\bibitem[Singer(1979)]{fenchel-rockafellar}
Ivan Singer.
\newblock A fenchel-rockafellar type duality theorem for maximization.
\newblock \emph{Bulletin of the Australian Mathematical Society}, 20\penalty0 (2):\penalty0 193--198, 1979.

\bibitem[Sinkhorn(1967)]{sinkhorn}
Richard Sinkhorn.
\newblock Diagonal equivalence to matrices with prescribed row and column sums.
\newblock \emph{The American Mathematical Monthly}, 74\penalty0 (4):\penalty0 402--405, 1967.

\bibitem[Skilling(1989)]{skilling}
John Skilling.
\newblock The eigenvalues of mega-dimensional matrices.
\newblock \emph{Maximum Entropy and Bayesian Methods: Cambridge, England, 1988}, pp.\  455--466, 1989.

\bibitem[Song et~al.(2021{\natexlab{a}})Song, Meng, and Ermon]{ddim}
Jiaming Song, Chenlin Meng, and Stefano Ermon.
\newblock Denoising diffusion implicit models.
\newblock \emph{Advances in Neural Information Processing Systems}, 2021{\natexlab{a}}.

\bibitem[Song \& Ermon(2019)Song and Ermon]{song2019generative}
Yang Song and Stefano Ermon.
\newblock Generative modeling by estimating gradients of the data distribution.
\newblock \emph{Advances in Neural Information Processing Systems}, 32, 2019.

\bibitem[Song et~al.(2021{\natexlab{b}})Song, Sohl-Dickstein, Kingma, Kumar, Ermon, and Poole]{scoresde}
Yang Song, Jascha Sohl-Dickstein, Diederik~P Kingma, Abhishek Kumar, Stefano Ermon, and Ben Poole.
\newblock Score-based generative modeling through stochastic differential equations.
\newblock \emph{The International Conference on Learning Representations}, 2021{\natexlab{b}}.

\bibitem[Stromme(2023)]{sb_toy1}
Austin Stromme.
\newblock Sampling from a schrödinger bridge.
\newblock In Francisco Ruiz, Jennifer Dy, and Jan-Willem van~de Meent (eds.), \emph{Proceedings of The 26th International Conference on Artificial Intelligence and Statistics}, volume 206 of \emph{Proceedings of Machine Learning Research}, pp.\  4058--4067. PMLR, 25--27 Apr 2023.

\bibitem[Tong et~al.(2023)Tong, Malkin, Huguet, Zhang, Rector-Brooks, Fatras, Wolf, and Bengio]{tong2023conditional}
Alexander Tong, Nikolay Malkin, Guillaume Huguet, Yanlei Zhang, Jarrid Rector-Brooks, Kilian Fatras, Guy Wolf, and Yoshua Bengio.
\newblock Conditional flow matching: Simulation-free dynamic optimal transport.
\newblock \emph{arXiv preprint arXiv:2302.00482}, 2\penalty0 (3), 2023.

\bibitem[Tong et~al.(2024)Tong, FATRAS, Malkin, Huguet, Zhang, Rector-Brooks, Wolf, and Bengio]{tong2024improving}
Alexander Tong, Kilian FATRAS, Nikolay Malkin, Guillaume Huguet, Yanlei Zhang, Jarrid Rector-Brooks, Guy Wolf, and Yoshua Bengio.
\newblock Improving and generalizing flow-based generative models with minibatch optimal transport.
\newblock \emph{Transactions on Machine Learning Research}, 2024.
\newblock ISSN 2835-8856.
\newblock URL \url{https://openreview.net/forum?id=CD9Snc73AW}.
\newblock Expert Certification.

\bibitem[Vahdat et~al.(2021)Vahdat, Kreis, and Kautz]{vahdat2021score}
Arash Vahdat, Karsten Kreis, and Jan Kautz.
\newblock Score-based generative modeling in latent space.
\newblock \emph{Advances in Neural Information Processing Systems}, 34:\penalty0 11287--11302, 2021.

\bibitem[Villani et~al.(2009)]{villani}
C{\'e}dric Villani et~al.
\newblock \emph{Optimal transport: old and new}, volume 338.
\newblock Springer, 2009.

\bibitem[Yong \& Zhou(2012)Yong and Zhou]{hjb}
Jiongmin Yong and Xun~Yu Zhou.
\newblock \emph{Stochastic controls: Hamiltonian systems and HJB equations}, volume~43.
\newblock Springer Science \& Business Media, 2012.

\bibitem[Zhu et~al.(2017)Zhu, Park, Isola, and Efros]{cyclegan}
Jun-Yan Zhu, Taesung Park, Phillip Isola, and Alexei~A Efros.
\newblock Unpaired image-to-image translation using cycle-consistent adversarial networks.
\newblock In \emph{Proceedings of the IEEE international conference on computer vision}, pp.\  2223--2232, 2017.

\end{thebibliography}
\bibliographystyle{iclr2025_conference}

\newpage
\appendix

\section{Proofs and Derivations} \label{appen:proof_derivation}
In this section, we provide the proof of the theorems in Sec. \ref{sec:dynamic_dual_euot}.
\jm{Moreover, we introduce another dual form and its relationship to our work. Furthermore, we derive the optimization problem Eq. \ref{eq:dual_interpret4} and justify the conditional sampling in line 5 of Algorithm \ref{alg:em_euot}.}
For all theorems, we assume that $\Psi$ is an \jm{differentiable} entropy function that satisfies \jm{superlinearity, i.e.} $\Psi^{'}_\infty := \lim_{x\rightarrow \infty} \Psi(x)/ x = \infty$.
In this case, $D_\Psi(\rho_1 | \nu)$ is infinity whenever $\rho_1$ has singularity with respect to $\nu$.
\jm{Thanks to the superlinearity, continuity and convexity of $\Psi$, $D_{\Psi}$ is a lower semi-conitinuous function.}

\subsection{Proofs} \label{appen:proofs}
The following lemma implies that the search space of the joint distribution $\pi$ in the EUOT problem can be extended to unnormalized density space $\mathcal{M}_2$.
Based on this lemma, we abuse the notation for the search space in the following theorems.

\begin{lemma} \label{theorem:lemma}
    Let $\pi^\star$ be the optimal plan  for
    \begin{equation} \label{eq:extended_euot_appen}
        \inf_{\pi_0 = \mu, \pi \in \mathcal{M}_2 (\mathcal{X}\times \mathcal{X})} \left[ \int_{\mathcal{X}\times \mathcal{X}} \frac{1}{2} \lVert x - y \rVert^2 d\pi (x,y) - \sigma^2 H(\pi) + \alpha D_\Psi (\pi_1 | \nu) \right].
    \end{equation}
    Note that the search space of $\pi$ is extended to unnormalized density space $\mathcal{M}_2$ instead of using $\mathcal{P}_2$ as in EUOT problem defined in Eq. \ref{eq:euot}.
    Even if the search space is extended, the mass of the optimal target marginal $\pi^\star_1$ is 1.
    In other words, the problem Eq. \ref{eq:extended_euot_appen} is equivalent to Eq. \ref{eq:euot}.
\end{lemma}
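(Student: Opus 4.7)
The plan is to observe that the lemma reduces to a one-line mass-balance check: the marginal constraint $\pi_0=\mu$ already pins down the total mass of $\pi$, so replacing $\mathcal{P}_2$ by $\mathcal{M}_2$ in the feasible set of Eq.~\ref{eq:extended_euot_appen} does not actually enlarge it. Hence I would not attempt a duality-based or strict-convexity argument; a direct set-equality of feasible regions will suffice.

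Concretely, I would first recall the definition of the first marginal, $\pi_0(A)=\pi(A\times\mathcal{X})$ for every Borel $A\subseteq\mathcal{X}$. Taking $A=\mathcal{X}$ and invoking the constraint $\pi_0=\mu$, together with $\mu\in\mathcal{P}_2(\mathcal{X})$, gives
\begin{equation*}
\pi(\mathcal{X}\times\mathcal{X}) \;=\; \pi_0(\mathcal{X}) \;=\; \mu(\mathcal{X}) \;=\; 1.
\end{equation*}
Thus any $\pi\in\mathcal{M}_2(\mathcal{X}\times\mathcal{X})$ feasible for Eq.~\ref{eq:extended_euot_appen} is automatically a probability measure, and the second marginal inherits unit mass $\pi_1(\mathcal{X})=\pi(\mathcal{X}\times\mathcal{X})=1$. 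Consequently the feasible sets of Eq.~\ref{eq:extended_euot_appen} and Eq.~\ref{eq:euot} coincide, the two objectives are compared on the same set, their infima agree, and any optimizer $\pi^\star$ of the extended problem satisfies $\pi^\star_1\in\mathcal{P}_2(\mathcal{X})$, i.e.\ has mass $1$.

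The only non-routine step is to confirm that the extended objective in Eq.~\ref{eq:extended_euot_appen} is well defined on $\mathcal{M}_2$ so that the problem statement itself makes sense; this is where the standing assumptions on $\Psi$ (differentiability, convexity, superlinearity $\Psi^{'}_\infty=\infty$, and the resulting lower semicontinuity of $D_\Psi$) listed at the top of Appendix~\ref{appen:proof_derivation} are invoked, ensuring that $D_\Psi(\pi_1\mid\nu)$ and $H(\pi)$ admit natural extensions to positive measures with finite second moment. Beyond this bookkeeping there is no genuine obstacle: the lemma follows purely from the marginal-mass identity, with no need for compactness, strict convexity, or variational perturbation arguments.
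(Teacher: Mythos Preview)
Your argument is correct and is a genuinely more elementary route than the paper's own proof. You exploit the simple measure-theoretic fact that for any positive measure $\pi$ on a product space, the total mass of $\pi$ coincides with the total mass of either marginal; since the constraint $\pi_0=\mu$ forces $\pi(\mathcal{X}\times\mathcal{X})=\mu(\mathcal{X})=1$, every feasible $\pi$ in the extended problem is already a probability measure and $\pi_1$ has unit mass automatically. No optimality of $\pi^\star$ is even needed.

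By contrast, the paper argues via duality: it writes down the known dual of the extended problem (Eq.~\ref{eq:euot_standard_dual_appen}), takes the first variation at the optimal potentials $(u^\star,v^\star)$, tests along the direction $(\delta u,\delta v)=(\lambda,-\lambda)$ to deduce that the reweighted target $\tilde\nu=(\Psi^*)'(-v^\star)\,\nu$ has unit mass, and finally invokes the primal--dual relation $d\pi^\star=e^{(u^\star+v^\star-c)/\epsilon}\,d\mu\,d\nu$ to identify $\pi^\star_1=\tilde\nu$. This route is valid but substantially heavier: it requires strong duality, differentiability of $\Psi^*$, and the explicit primal--dual correspondence, none of which your argument needs. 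The duality machinery would be essential in a fully unbalanced setting where \emph{both} marginals are softened, because then no marginal constraint pins the total mass; here, with $\pi_0=\mu$ retained as a hard constraint, your one-line mass-balance observation renders all of that unnecessary. Your remark that the only residual bookkeeping is well-definedness of $H$ and $D_\Psi$ on $\mathcal{M}_2$ is apt, and the standing assumptions on $\Psi$ indeed cover it.
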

\begin{proof}
    The well-known dual form of the Eq. \ref{eq:extended_euot_appen} \cite{eot_dual} is defined as follows:
    \begin{equation} \label{eq:euot_standard_dual_appen}
        \sup_{u, v} \int_{\mathcal{X}} u(x) d\mu(x) - \int_{\mathcal{X}} \Psi^*(-v(y)) d\nu(y) - \epsilon \int_{\mathcal{X}\times \mathcal{X}} e^{\frac{u(x) + v(y) -c(x,y)}{\epsilon} } d\mu(x) d\nu(y).
    \end{equation}
    Thanks to the Fenchel-Rockafellar theorem \cite{fenchel-rockafellar}, the strong duality holds (See Proposition 4.2 in \cite{uot-robust}.
    The first variation of Eq. \ref{eq:euot_standard_dual_appen} with respect to the pair of the optimal potentials $(u^\star, v^\star)$ is as follows:
    \begin{equation} \label{eq:first_variation_appen}
        \int_{\mathcal{X}} \delta u(x) d\mu(x) + \int_{\mathcal{X}} \delta v {\Psi^*}' (-v^\star(y)) d\nu(y) - \int_{\mathcal{X}\times \mathcal{X}} \left(\delta u + \delta v \right) e^{\frac{u^\star(x) + v^\star(y) -c(x,y)}{\epsilon} } d\mu(x) d\nu(y).
    \end{equation}
    Now, let $ \Tilde{\nu}(y) = {\Psi^*}'(-v^\star (y)) \nu(y)$.
    If the $(\delta u, \delta v) = (\lambda, -\lambda)$, then the Eq. \ref{eq:first_variation_appen} can be written as follows: 
    \begin{equation}
        \int \lambda d\mu - \int \lambda d\Tilde{\nu} = \lambda (1 - {\rm m}(\Tilde{\nu})),
    \end{equation}
    where ${\rm m}(\cdot)$ denotes the mass of the measure.
    Since the potentials are optimal, the mass of $\Tilde{\nu}$ should be 1.
    Furthermore, reordering the first variation with respect to $\delta v$ in Eq. \ref{eq:first_variation_appen}, we can derive that
    \begin{equation}
        {\Psi^*}' (-v^\star(y)) = \int e^{\frac{u^\star(x) + v^\star(y) -c(x,y)}{\epsilon}} d\mu(x).
    \end{equation}
    Then, by leveraging the primal-dual relationship \cite{uot-robust}, i.e. 
    \begin{equation}
        d \pi^\star(x,y) = e^{\frac{u^\star(x)+v^\star(y) -c(x.y)}{\epsilon}}  d\mu(x) d\nu(y),
    \end{equation}
    we can derive the following equation:
    \begin{equation}
        \pi^\star_1(y) = \int \pi^\star(x,y) dx = \left( \int   e^{\frac{u^\star(x)+v^\star(y) -c(x.y)}{\epsilon}} d\mu(x) \right) \nu(y) = {\Psi^*}'(-v^\star(y)) \nu(y) = \Tilde{\nu}(y).
    \end{equation}
    Since ${\rm m}(\Tilde{\nu}) = 1$, $\pi^\star_1$ has a mass of 1.
\end{proof}

\begin{theorem} \label{theorem:dynamic_EUOT_appen}
    The EUOT problem is equivalent to the following dynamic transport problem:
    \begin{equation} \label{eq:dynamic_euot_appen}
        \inf_u \left[ \int_0^1 \int_{\mathcal{X}} \frac{1}{2}\| u(t,x) \|^2 d\rho_t(x) dt + \alpha D_\Psi (\rho_1| \nu)\right],
    \end{equation}
    where $\partial_t \rho_t + \nabla \cdot (u \rho_t ) - \frac{\sigma^2}{2} \Delta \rho_t = 0$ and $\rho_0 = \mu$.
    Moreover, the optimal solution $\mathbb{P}^\star$ satisfies the reciprocal property, i.e., 
    \begin{equation} \label{eq:reciprocal_euot_appen}
        \mathbb{P}^{\star}_t (\cdot|x,y) = \mathcal{N}(\cdot| (1-t)x+ ty, \sigma^2 t (1-t) I), \quad (x,y) \sim \mathbb{P}^\star_{0,1} \ \text{almost surely.}
    \end{equation}
\end{theorem}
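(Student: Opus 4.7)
The plan is to reduce the statement to the already-established SB$\leftrightarrow$EOT equivalence (Sec.~\ref{sec:background_sb}) by conditioning on the target marginal. Concretely, I would first write both the static and the dynamical EUOT objectives as a nested infimum, where the outer variable is the target marginal $\rho_1$ and the inner problem has this marginal fixed. On the static side, fixing $\pi_1 = \rho_1$ reduces the EUOT objective to
\begin{equation*}
    \inf_{\pi \in \Pi(\mu, \rho_1)} \left[ \int \tfrac{1}{2}\|x-y\|^2 d\pi(x,y) - \sigma^2 H(\pi)\right] + \alpha D_\Psi(\rho_1|\nu),
\end{equation*}
whose inner infimum is exactly the EOT problem between $\mu$ and $\rho_1$. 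On the dynamical side, fixing the terminal marginal $\rho_1$ of the Fokker--Planck flow (together with $\rho_0=\mu$) reduces the inner problem to the SB problem (Eq.~\ref{eq:SB_dynamic}) between $\mu$ and $\rho_1$. Invoking the EOT$=$SB equivalence stated in Sec.~\ref{sec:background_sb} for each admissible $\rho_1$, the two inner infima agree, and taking the outer infimum over $\rho_1$ yields equivalence of the static (Eq.~\ref{eq:euot}) and dynamical (Eq.~\ref{eq:dynamic_euot_appen}) formulations.

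For this reduction to be valid I need a measurable selection / interchange of infima argument: the value of the inner EOT problem must coincide with the value of the inner SB problem for each $\rho_1$ in the effective domain of $D_\Psi(\cdot|\nu)$, and one must argue that the optimal $\rho_1^\star$ is attained (or at least that the infimum can be taken jointly). Existence of the outer minimizer follows from the lower semicontinuity and superlinearity of $D_\Psi$ (stated in the appendix preamble) combined with the coercivity of the transport cost; together with Lemma~\ref{theorem:lemma} (which lets us work in $\mathcal{M}_2$ without loss), this yields a unique optimal $\pi^\star$ by strict convexity of $H$.

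For the reciprocal property, I would observe that any optimal $\mathbb{P}^\star$ for the dynamical EUOT problem must, once its endpoint marginals $(\mu, \rho_1^\star)$ are fixed, also be optimal for the inner SB problem with those marginals. This is because otherwise one could replace the bridge law of $\mathbb{P}^\star$ conditional on $(X_0, X_1)$ by the SB-optimal bridge with the same endpoint coupling, strictly decreasing the $\int \tfrac{1}{2}\|u\|^2 d\rho_t\, dt$ term while leaving the endpoint marginal $\rho_1^\star$ (and hence the penalty $\alpha D_\Psi(\rho_1^\star|\nu)$) unchanged, contradicting optimality. The reciprocal property (Eq.~\ref{eq:reciprocal}) for the standard SB problem then directly gives Eq.~\ref{eq:reciprocal_euot_appen}.

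The main obstacle I expect is the rigorous handling of the outer infimum over $\rho_1$: one must justify the exchange of $\inf_{\rho_1}$ with the inner SB/EOT problem (including existence/attainment in the unbalanced setting), and control the set of $\rho_1$ on which both the inner EOT value and $D_\Psi(\rho_1|\nu)$ are finite. The superlinearity assumption $\Psi'_\infty = \infty$ forces $\rho_1 \ll \nu$ on this set, which together with Fenchel--Rockafellar duality (used in the preceding lemma) and the strong convexity induced by $H$ should be enough to conclude existence and uniqueness, after which the reciprocal-property step is a clean consequence of the SB result.
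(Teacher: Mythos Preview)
Your proposal is correct and follows essentially the same route as the paper: fix the terminal marginal $\rho_1$, observe that the inner problem on each side is the SB (dynamical) or EOT (static) problem between $\mu$ and $\rho_1$, invoke the SB$=$EOT equivalence, and then take the outer infimum; the reciprocal property follows because the optimizer must in particular solve the inner SB problem for its own endpoint marginals. The paper's proof is terser and does not spell out the interchange-of-infima and existence issues you flag, but the argument structure is identical.
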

\begin{proof}
    Suppose $u^\star$ is the solution of Eq. \ref{eq:dynamic_euot_appen}. 
    Let $\mathbb{P}^\star$ be the path measure induced by $X^{u^\star}$.
    Since $\Psi^{'}_\infty = \infty$, $\mathbb{P}^\star_1 (= \rho_1)$ is absolutely continuous with respect to $\nu$. 
    Note that $\mathbb{P}^\star_1 \in \mathcal{P}_2(\mathcal{X})$ by the Lemma \ref{theorem:lemma}.
    This implies that the objective of $u$ in Eq. \ref{eq:dynamic_euot_appen} is to solve the SB problem between $\mu$ and $\mathbb{P}^\star_1$.
    Thus, $\mathbb{P}^\star_t$ satisfies the reciprocal property.
    Moreover, using the reciprocal property, Eq. \ref{eq:dynamic_euot_appen} could be reformulated to the static formulation:
    \begin{equation} \label{eq:klplusdiv_euot_appen}
        \inf_{\pi\in \mathcal{P}_2, \ \pi_0 = \mu} \left[ \sigma^2 D_{\text{KL}}(\pi| \mathbb{Q}_{0,1}) + \alpha D_{\Psi}(\pi_1 | \nu) \right].
    \end{equation}
    Now, by applying Eq. \ref{eq:def_kl}, we obtain
    \begin{equation} \label{eq:euot_appen}
        \inf_{\pi_0 = \mu, \pi \in \mathcal{P}_2 (\mathcal{X})} \left[ \int_{\mathcal{X}\times \mathcal{X}} \frac{1}{2} \lVert x - y \rVert^2 d\pi (x,y) - \sigma^2 H(\pi) + \alpha D_\Psi (\pi_1 | \nu) \right].
    \end{equation}  
\end{proof}

\begin{proposition}[\textbf{Dual I}]
    The EUOT problem is equivalent to the following problem:
    \begin{equation} \label{eq:weakot_euot_appen}
            \sup_{V\in \Phi_{2,b}} \inf_{u} \left[ \mathbb{E} \left[\int_0^1 \frac{1}{2} \| u(t,X^u_t) \|^2 dt + V(X^u_1) \right] -  \int_{\mathcal{X}} \alpha \Psi^* \left( \frac{V(y)}{\alpha} \right) d\nu(y) \right], \quad {\rm s.t.} \ X_0 \sim \mu.
    \end{equation}
\end{proposition}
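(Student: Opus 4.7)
The plan is to start from the static EUOT problem in Eq.~\ref{eq:euot}, dualize only the $f$-divergence term, swap inf and sup, and then recognize the remaining inner minimization as a standard weak-EOT problem whose dynamical form is already established (cf.\ Eq.~\ref{eq:weakot_eot} from \citet{enotdiffusion}). Concretely, I would use the convex-conjugate representation of $D_\Psi$: for $\pi_1 \ll \nu$,
\begin{equation}
    \alpha\, D_\Psi(\pi_1|\nu) \;=\; \sup_{V\in\Phi_{2,b}} \left[ \int_{\gX} V(y)\, d\pi_1(y) - \int_{\gX} \alpha\, \Psi^*\!\left(\tfrac{V(y)}{\alpha}\right) d\nu(y) \right],
\end{equation}
which is justified by the Legendre duality between $\Psi$ and $\Psi^*$ together with the superlinearity assumption $\Psi'_\infty=\infty$ (ensuring $D_\Psi$ is lower semicontinuous on $\gM_2$).

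\paragraph{Step-by-step.} First, I would substitute this expression into the EUOT objective and obtain
\begin{equation}
    \inf_{\pi_0=\mu} \sup_{V\in\Phi_{2,b}} \left[ \int \tfrac{1}{2}\|x-y\|^2 d\pi - \sigma^2 H(\pi) + \int V d\pi_1 - \int \alpha \Psi^*\!\left(\tfrac{V}{\alpha}\right) d\nu \right].
\end{equation}
Second, I would swap the infimum and supremum. This is the key analytic step: the functional is convex in $\pi$ (quadratic cost plus negative entropy plus linear term) and concave in $V$ (linear minus convex), so a Fenchel--Rockafellar / Sion-type argument applies; the bounded-below condition in $\Phi_{2,b}$ together with the quadratic growth bound gives the required coercivity so that the optimizers stay in a relatively compact set (alternatively, one can invoke Proposition~4.2 / the dual form already recorded in Eq.~\ref{eq:euot_standard_dual_appen} and Lemma~\ref{theorem:lemma} to bypass an abstract minimax theorem). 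Third, after the swap, the inner problem
\begin{equation}
    \inf_{\pi_0=\mu} \left[ \int \tfrac{1}{2}\|x-y\|^2 d\pi - \sigma^2 H(\pi) + \int V d\pi_1 \right]
\end{equation}
is exactly the weak-EOT problem appearing in the EOT dual derivation of \citet{enotdiffusion}; using Eq.~\ref{eq:def_kl} it becomes $\sigma^2 D_{\mathrm{KL}}(\pi|\mathbb{Q}_{0,1}) + \int V d\pi_1$ (up to a constant independent of $\pi$), and then the Girsanov identity Eq.~\ref{eq:girsanov} converts it to the controlled SDE formulation
\begin{equation}
    \inf_u \mathbb{E}\!\left[\int_0^1 \tfrac{1}{2}\|u(t,X^u_t)\|^2 dt + V(X^u_1)\right], \qquad dX^u_t=u\,dt+\sigma dW_t,\ X^u_0\sim\mu.
\end{equation}
Combining these pieces yields Eq.~\ref{eq:weakot_euot_appen}.

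\paragraph{Main obstacle.} The routine parts are the Legendre duality of $D_\Psi$ and the Girsanov/reciprocal reduction of the inner problem (already done in Sec.~\ref{sec:background_sb}). The delicate step is the inf--sup swap: standard minimax theorems require some topological hypothesis (tightness/compactness of a sublevel set of $\pi$ or of $V$) that is not automatic on the noncompact domain $\gX=\sR^d$ with the quadratic-growth class $\Phi_{2,b}$. I would handle this either (i) by restricting $V$ first to bounded continuous functions, applying Sion's theorem there, and then extending to $\Phi_{2,b}$ by a monotone/approximation argument exploiting the quadratic moments in $\gP_2(\gX)$, or (ii) by noting that strong duality follows directly from the Fenchel--Rockafellar theorem applied to the pair (linear marginal constraint, convex cost $+$ convex divergence), which is exactly the argument used in Lemma~\ref{theorem:lemma} to establish Eq.~\ref{eq:euot_standard_dual_appen}; matching that dual with the derivation above then gives the claim without needing a separate minimax statement.
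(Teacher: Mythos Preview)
Your proposal is correct and follows essentially the same route as the paper. The paper first rewrites EUOT as $\inf_\pi[\sigma^2 D_{\mathrm{KL}}(\pi|\mathbb{Q}_{0,1})+\alpha D_\Psi(\pi_1|\nu)]$, applies the Fenchel--Rockafellar theorem directly to the pair $(F,G)=(\sigma^2 D_{\mathrm{KL}}(\cdot|\mathbb{Q}_{0,1}),\,\alpha D_\Psi(\cdot_1|\nu))$ to obtain $\sup_V[-F^*(-V)-G^*(V)]$, identifies $-F^*(-V)$ with the weak-EOT inner problem via a cited lemma, and then passes to the SDE formulation by Girsanov---exactly your option~(ii). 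Your primary presentation (dualize $D_\Psi$, then swap $\inf$/$\sup$) is the same duality unpacked, and you correctly flag that Fenchel--Rockafellar is the clean way to justify the swap without a separate minimax argument.
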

\begin{proof}
    \jm{Let $F(\pi): = \sigma^2 D_{\text{KL}}(\pi | \mathbb{Q}_{0,1})$ and $G(\pi) := D_\Psi(\pi_1 | \nu)$.}
    Then, Eq. \ref{eq:klplusdiv_euot_appen} can be rewritten as follows:
    \begin{equation}
        \inf_{\pi \in \jm{\mathcal{P}_2}} \left[ {F(\pi) + G(\pi)}\right].
    \end{equation}
    \jm{Note that $F$ and $G$ are convex lower semi-continuous functions. Thus, by applying Fenchel-Rockafellar theorem \cite{fenchel-rockafellar}, we obtain the following duality form:}
    \begin{equation}
        \inf_{\pi \in \jm{\mathcal{P}_2}} {F(\pi) + G(\pi)} = \sup_{V \in \jm{\mathcal{P}^*_2}}\left[ -F^*(-V) - G^*(V) \right].
    \end{equation}
    Note that $\jm{\mathcal{P}^*_2} = \Phi_{2,b}$ by Lemma 9.8 in \cite{proof1}.
    Moreover, by proof of Theorem 9.5 in \cite{proof1}, 
    \begin{equation}
        -F^*(-V) = \inf_{\pi \in P_2(\mathcal{X})} \left[ \sigma^2D_{\text{KL}}(\pi | \mathbb{Q}_{0,1}) + \int V(y) d \pi_1(y) \right].
    \end{equation}
    Since $G^*(V) = \int \alpha \Psi^*(V(y) / \alpha) d\nu(y)$, we finally obtain the following dual form:
    \begin{equation} \label{eq:klplusdual_appen}
        \sup_{V\in \Phi_{2,b}} \inf_{\pi \in \mathcal{P}_2} \left[ \sigma^2 D_{\text{KL}} (\pi |\mathbb{Q}_{0,1}) + \int V(y) d\pi_1(y)  -  \int_{\mathcal{X}} \alpha \Psi^* \left( \frac{V(y)}{\alpha} \right) d\nu(y) \right].
    \end{equation}
    Through the discussion in the proof of Theorem \ref{theorem:dynamic_EUOT_appen} or Sec. 2.2 in \citet{enotdiffusion}, $\pi$ can be replaced by the distribution of the stochastic process $\{X^u_t\}$ and $D_{\text{KL}} (\pi | \mathbb{Q}_{0,1}) = \mathbb{E}\left[ \int \lVert u_t \rVert^2 / (2\sigma^2) dt \right]$.
    By the replacement, we obtain the following dual form of EUOT.
    \begin{equation}
            \sup_{V\in \Phi_{2,b}} \inf_{u} \left[ \mathbb{E} \left[\int_0^1 \frac{1}{2} \| u(t,X^u_t) \|^2 dt + V(X^u_1) \right] -  \int_{\mathcal{X}} \alpha \Psi^* \left( \frac{V(y)}{\alpha} \right) d\nu(y) \right],
    \end{equation}
    where $X_0 \sim \mu$.
\end{proof}

\subsection{Derivations} \label{appen:derivations}
\jm{We provide the another dual formulation of dynamical EUOT. Then, we also introduce the connection to our dual form, i.e. Eq. \ref{eq:weakot_euot}. Furthermore, we derive the optimization problem Eq. \ref{eq:dual_interpret4}. Finally, we provide the justification of conditional sampling in line 5 of Algorithm \ref{alg:em_euot}.}

\paragraph{Lagrangian Dual of Dynamical Form of EUOT}
Starting from the dynamical formulation of EUOT (Eq. \ref{eq:dynamic_euot_appen}), we can also derive an alternative dual formulation of EUOT. Note that when $\Psi$ is a convex indicator, the following Eq. \ref{eq:lagrangian_euot_appen} corresponds to the SB objective of \citet{wlf} (Example 4.4). Specifically, the dual formulation is expressed as follows:
\begin{proposition}[\textbf{Dual II}]\label{theorem:lagrangian_euot_appen}
    The dual form of Eq. \ref{eq:dynamic_euot_appen} is written as follows:
    \begin{equation} \label{eq:lagrangian_euot_appen}
            \sup_{A} \inf_{\rho}  \left[ \int_0^1 \int_{\mathcal{X}} \left( \partial_t A_t -\frac{1}{2} \| \nabla A_t \|^2 + \frac{\sigma^2}{2} \Delta A_t \right) d\rho_t dt + \int_{\mathcal{X}} A_0 d\mu  - \int_{\mathcal{X}} \alpha \Psi^* \left(\frac{A_1}{\alpha}\right) d\nu \right],
    \end{equation}
    where $\rho_0 = \mu$ and $A:[0,1]\times \mathcal{X} \rightarrow \mathbb{R}$.
    Furthermore, for the optimal $u^\star$ in Eq. \ref{eq:dynamic_euot_appen} and optimal $A^\star$, $u^\star = - \nabla A^\star$.
\end{proposition}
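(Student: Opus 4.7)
The plan is to derive Eq.~\ref{eq:lagrangian_euot_appen} as the Lagrangian dual of the dynamical primal Eq.~\ref{eq:dynamic_euot_appen}, using $A:[0,1]\times\mathcal{X}\to\mathbb{R}$ as the Lagrange multiplier for the Fokker--Planck equality constraint. Writing
\begin{equation*}
\mathcal{L}(\rho,u,A) = \int_0^1\!\!\int \tfrac{1}{2}\|u\|^2 \,d\rho_t\, dt + \alpha D_\Psi(\rho_1|\nu) - \int_0^1\!\!\int A\bigl(\partial_t\rho + \nabla\cdot(u\rho) - \tfrac{\sigma^2}{2}\Delta\rho\bigr) dx\,dt,
\end{equation*}
the primal equals $\inf_{\rho_0=\mu,\,u}\sup_A\mathcal{L}$. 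The dual Eq.~\ref{eq:lagrangian_euot_appen} is obtained by swapping $\inf$ and $\sup$ (strong duality, justified at the end) and explicitly computing the resulting inner problem.

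The first computational step is integration by parts to move every derivative from $\rho$ onto $A$. The time integral produces boundary terms $-\int A_1\rho_1\,dx + \int A_0\,d\mu$ (using $\rho_0=\mu$), the divergence term becomes $\int \nabla A\cdot u \,d\rho_t$, and the Laplacian transfers onto $A$ by self-adjointness under the standing spatial-decay assumptions. Collecting,
\begin{equation*}
\mathcal{L} = \int_0^1\!\!\int \bigl[\tfrac{1}{2}\|u\|^2 + \nabla A\cdot u + \partial_t A + \tfrac{\sigma^2}{2}\Delta A\bigr] d\rho_t\, dt + \alpha D_\Psi(\rho_1|\nu) - \int A_1\rho_1\,dx + \int A_0\,d\mu.
\end{equation*}
Pointwise minimization of the quadratic $\tfrac{1}{2}\|u\|^2 + \nabla A\cdot u$ gives the minimizer $u^\star=-\nabla A$ with value $-\tfrac{1}{2}\|\nabla A\|^2$, which already yields the optimality relation $u^\star=-\nabla A^\star$ asserted in the proposition.

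The second computational step is to handle the $\rho_1$ terms, which are now unconstrained since the PDE has been absorbed into $\mathcal{L}$. Using the Fenchel--Legendre pointwise identity $\inf_{r\geq 0}[\alpha\Psi(r) - A_1 r] = -\alpha\Psi^*(A_1/\alpha)$ and integrating against $\nu$,
\begin{equation*}
\inf_{\rho_1}\Bigl[\alpha D_\Psi(\rho_1|\nu) - \int A_1\rho_1\,dx\Bigr] = -\int \alpha\Psi^*(A_1/\alpha)\,d\nu.
\end{equation*}
Substituting back recovers precisely the dual expression of the proposition, with the remaining $\inf_\rho$ acting on the running term $\int_0^1\!\!\int\bigl(\partial_t A - \tfrac{1}{2}\|\nabla A\|^2 + \tfrac{\sigma^2}{2}\Delta A\bigr)d\rho_t\,dt$.

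Strong duality, i.e.\ the exchange of $\sup_A$ and $\inf_{\rho,u}$, is justified by Fenchel--Rockafellar in the same spirit as the proof of Prop.~A.3 (Dual~I): the primal objective is convex in $(\rho,u)$, and $D_\Psi$ is lower semi-continuous thanks to the assumed convexity, continuity, and superlinearity $\Psi'_\infty=\infty$ of the entropy function. The main obstacle is not algebraic but functional-analytic: one must choose a topology for $(\rho,u)$ and a test-function class for $A$ so that the integration by parts is rigorous when $\rho$ is merely a non-negative measure and $A$ is at best a viscosity subsolution of the HJB equation. Once the appropriate Sobolev/distributional setting is fixed (as in the Benamou--Brenier literature), the calculation above proceeds without modification.
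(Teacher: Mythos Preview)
Your proposal is correct and follows essentially the same route as the paper: dualize the Fokker--Planck constraint with a multiplier $A$, integrate by parts to move derivatives onto $A$, minimize the resulting quadratic in $u$ pointwise to obtain $u^\star=-\nabla A$, and replace the $D_\Psi$ term by its Fenchel conjugate. The only cosmetic difference is that the paper introduces additional multipliers $\lambda_0,\lambda_1$ and an auxiliary variable $P$ for the boundary conditions and then eliminates them, whereas you keep $\rho_0=\mu$ hard and minimize over $\rho_1$ directly; the two computations are equivalent.
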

\begin{proof}
    The dual form of Eq. \ref{eq:dynamic_euot_appen} is
    \begin{align} \label{eq:dual1_lagrangian_euot_appen}
        \begin{split}
            \inf_{\rho, u} \sup_{A, \lambda_0, \lambda_1} \ \  &\int_0^1 \int_\mathcal{X} \frac{1}{2} \lVert u(t,x) \rVert^2 \rho_t(x) dx dt + \alpha \int_{\mathcal{X}} \Psi\left(\frac{dP(x)}{d\nu(x)}\right) d\nu(x) \\
            - &\int^1_0 \int_{\mathcal{X}} A(t,x) \left( \partial_t \rho_t(x) + \nabla \cdot (u \rho_t (x)) - \frac{\sigma^2}{2} \Delta \rho_t (x) \right) dx dt \\
            + &\int_{\mathcal{X}} \lambda_0(x) (\rho_0(x) - \mu(x)) dx + \int_{\mathcal{X}} \lambda_1(x) (\rho_1(x) - P(x)) dx,
        \end{split}
    \end{align}
    where $A:[0,1]\times \mathcal{X} \rightarrow \mathbb{R}$ and $\lambda_0, \lambda_1 : \mathcal{X} \rightarrow \mathbb{R}$ are the dual variables.
    Note that we can freely swap the order of the optimization since the optimization problem is convex in $\rho,\ P,\ u$, and linear in $A,\  \lambda_0, \ \lambda_1$.
    By applying integration by parts to second line of Eq. \ref{eq:dual1_lagrangian_euot_appen}, we obtain
    \begin{align} \label{eq:dual2_lagrangian_euot_appen}
        \begin{split}
            \sup_{A, \lambda_0, \lambda_1} \inf_{\rho, P, u} \ \  &\int_0^1 \int_\mathcal{X} \frac{1}{2} \lVert u \rVert^2 d\rho_t dt + \alpha \int_{\mathcal{X}} \Psi\left(\frac{dP}{d\nu}\right) d\nu \\
            + &\int^1_0 \int_{\mathcal{X}} \left( \partial_t A_t + \langle u, \nabla A_t \rangle + \frac{\sigma^2}{2} \Delta A_t \right) d\rho_t dt - \int_{\mathcal{X}} A_1 d\rho_1 + \int_{\mathcal{X}} A_0 d\rho_0 \\
            + &\int_{\mathcal{X}} \lambda_0(x) (\rho_0(x) - \mu(x)) dx + \int_{\mathcal{X}} \lambda_1(x) (\rho_1(x) - P(x)) dx.
        \end{split}
    \end{align}
    Here, note that the last two terms in the second line of Eq. \ref{eq:dual1_lagrangian_euot_appen} pop out from the integration of parts with respect to the time variable $t$.
    Since the above problem with respect to $u$ is quadratic, we can obtain the explicit solution $u = - \nabla A$.
    Moreover, by freely swapping the optimization variables, we obtain $\lambda_0 = A_0$, $\lambda_1 = A_1$, $\rho_0 = \mu$, and $\rho_1 = P$.
    Moreover, by the definition of convex conjugate, we obtain Eq. \ref{eq:lagrangian_euot_appen}:
    \begin{align} \label{eq:dual3_lagrangian_euot_appen}
        \begin{split}
            &\sup_A  \inf_{\rho} \ \int^1_0 \int_{\mathcal{X}}  \partial_t A_t - \frac{1}{2} \lVert \nabla A_t \rVert^2 + \frac{\sigma^2}{2} \Delta A_t \ d\rho_t dt  \\
            & \qquad \quad -  \int_{\mathcal{X}} A_1 \frac{d\rho_1}{d\nu} - \alpha \Psi\left(\frac{d\rho_1}{d\nu}\right) d\nu + \int_{\mathcal{X}} A_0 d\mu,
        \end{split}
        \\
        = &\inf_{\rho} \sup_A \int^1_0 \int_{\mathcal{X}} \left( \partial_t A_t - \frac{1}{2} \lVert \nabla A_t \rVert^2 + \frac{\sigma^2}{2} \Delta A_t \right) d\rho_t dt - \int_{\mathcal{X}} \alpha \Psi^* \left(\frac{A_1}{\alpha}\right) d\nu + \int_{\mathcal{X}} A_0 d\mu.
    \end{align}
\end{proof}

\paragraph{Relationship between Two Dual Formulations}
We establish the equivalence between the two dual forms: \textit{Dual I} and \textit{Dual II}.
\begin{proposition} \label{prop:dualisdual_appen}
    The Lagrangian dual formulation of Eq. \ref{eq:weakot_euot_appen} (Dual I) is equivalent to Eq. \ref{eq:lagrangian_euot_appen} (Dual II).
    Moreover, let $V^\star:\mathcal{X} \rightarrow \mathbb{R}$ be the solution of $V$ in Eq. \ref{eq:weakot_euot_appen} and let $A^\star:[0,1]\times \mathcal{X} \rightarrow \mathbb{R}$ and $\rho^\star$ be the solution of $A$ and $\rho$ in Eq. \ref{eq:lagrangian_euot_appen}, respectively.
    Then, $V^\star(x) = A^\star_1(x)$ in $\rho^\star_1$ almost surely.
\end{proposition}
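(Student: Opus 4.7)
My plan is to collapse both duals to a common concave program by characterizing their inner optimizations via the Hamilton-Jacobi-Bellman (HJB) equation, then read off the identification from the resulting correspondence. For Dual I (Eq.~\ref{eq:weakot_euot_appen}), the inner $\inf_u$ is a stochastic optimal control problem with quadratic running cost and terminal cost $V$. By dynamic programming (Eq.~\ref{eq:hjb}), its value equals $\int_{\mathcal{X}}\tilde V_0\,d\mu$, where $\tilde V$ solves the HJB equation with terminal $\tilde V_1=V$. Reparametrizing the outer sup by $A:=\tilde V$, Dual I becomes $\sup_{A\text{ solves HJB}} J(A)$, with $J(A):=\int A_0\,d\mu - \int \alpha\Psi^*(A_1/\alpha)\,d\nu$.

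\textbf{Dual II reduction.} In Eq.~\ref{eq:lagrangian_euot_appen}, only the first integral depends on $\rho$, and it is linear in the non-negative measure $\rho$ on $(0,1]\times\mathcal{X}$ (the boundary data $\rho_0=\mu$ is a null slice in $\int_0^1$). Hence $\inf_\rho=-\infty$ unless the HJB residual $\partial_t A - \tfrac12\|\nabla A\|^2 + \tfrac{\sigma^2}{2}\Delta A$ is pointwise non-negative, and the infimum equals $0$ when it is (attained at $\rho\equiv 0$ on $(0,1]$). So Dual II reduces to $\sup_{A\text{ sub-sol.}} J(A)$. For any HJB sub-solution $A$ and any admissible drift $u$, Itô's formula combined with the Cauchy-Schwarz inequality $u\cdot\nabla A \geq -\tfrac12\|\nabla A\|^2 - \tfrac12\|u\|^2$ gives $A_0(x)\leq \mathbb{E}\bigl[A_1(X^u_1) + \int_0^1 \tfrac12\|u_t\|^2\,dt \,\big|\, X^u_0=x\bigr]$; taking $\inf_u$ yields $A_0\leq \bar A_0$, where $\bar A$ is the HJB solution with terminal $\bar A_1 = A_1$. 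Since $J(A)\leq J(\bar A)$, restricting to HJB solutions loses nothing, and Dual II equals Dual I.

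\textbf{Identification and main obstacle.} Given a Dual I maximizer $V^\star$, the HJB solution with terminal $V^\star$ is a Dual II maximizer with $A^\star_1=V^\star$ everywhere. For a generic Dual II maximizer $A^\star$, the comparison argument above forces $A^\star$ itself to solve HJB (otherwise $\bar A^\star$ strictly improves it), so $A^\star$ is determined by $A^\star_1$; matching outer-maximizers and invoking the primal-dual pairing with $\rho^\star$ derived in Lemma~\ref{theorem:lemma} (which identifies $\rho^\star_1 = (\Psi^*)'(V^\star/\alpha)\,\nu$) then forces $A^\star_1 = V^\star$ on $\mathrm{supp}(\rho^\star_1)$, with the a.s. qualifier accounting for under-determination of $A^\star_1$ off this support. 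The most delicate step is the comparison argument inside the unbounded quadratic-growth class $\Phi_{2,b}$: one must ensure integrability of the quadratic running cost and of $\|\nabla A\|^2$ along the controlled SDE so that the stochastic integral in Itô's formula is a martingale, and one must make sense of the HJB (sub-)solution notion in the weak/viscosity sense, since the value function $\tilde V$ need not be classical $C^{1,2}$.
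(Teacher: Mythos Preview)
Your approach is correct and takes a genuinely different route from the paper. The paper proceeds by direct Lagrangian computation on Dual~I: it rewrites $\inf_u$ via the Fokker--Planck constraint on $\rho$, introduces a multiplier $A$, integrates by parts, and eliminates $u=-\nabla A$, arriving at
\[
\sup_{V,A}\ \inf_{\rho\ge 0,\ \rho_0=\mu}\Bigl[\int_{\mathcal{X}} (V_1-A_1)\,d\rho_1+\int_{\mathcal{X}} A_0\,d\mu-\int_{\mathcal{X}}\alpha\Psi^*\!\bigl(\tfrac{V_1}{\alpha}\bigr)d\nu+\int_0^1\!\!\int_{\mathcal{X}}\!\bigl(\partial_t A-\tfrac12\|\nabla A\|^2+\tfrac{\sigma^2}{2}\Delta A\bigr)d\rho_t\,dt\Bigr].
\]
The identification $V^\star=A^\star_1$ $\rho^\star_1$-a.s.\ drops out in one line from the term $\int(V_1-A_1)\,d\rho_1$: the inner infimum is $-\infty$ unless $V_1\ge A_1$, and wherever $V_1>A_1$ the optimal $\rho_1$ vanishes; collapsing $V_1$ onto $A_1$ then yields Dual~II verbatim. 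Your route instead characterizes each dual separately---Dual~I as $\sup$ over HJB \emph{solutions} via dynamic programming, Dual~II as $\sup$ over HJB \emph{sub-solutions} via linearity of $\inf_\rho$---and closes the gap with an It\^o/Young comparison principle. This is conceptually clean and makes the HJB structure explicit, but it imports SOC regularity machinery (verification, viscosity solutions, integrability for the martingale term) that the paper's purely algebraic argument sidesteps entirely.

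The one place where the paper's route is materially stronger is the identification: the saddle-point term $(V_1-A_1)\rho_1$ gives the $\rho^\star_1$-a.s.\ equality directly, whereas your argument has to pass through ``a generic maximizer must solve HJB, so is determined by its terminal value, so match terminal values via the primal--dual relation.'' That chain needs uniqueness of the outer maximizer, which fails when $\Psi^*$ is merely affine (e.g.\ $\Psi=\iota$), and your appeal to Lemma~\ref{theorem:lemma} uses the opposite sign convention ($\pi_1^\star=(\Psi^*)'(-v^\star)\nu$ there). This step can be repaired, but the paper's Lagrangian bookkeeping handles it more economically.
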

\begin{proof}
    We derive the results by following the proof of Proposition \ref{theorem:lagrangian_euot_appen}.
    The Lagrangian dual of Eq. \ref{eq:weakot_euot_appen} can be reformulated as follows:
    \begin{align}
        \begin{split}
            &\sup_{V,A} \inf_{u,\rho} \ \int_0^1 \int_{\mathcal{X}} \frac{1}{2} \| u \|^2 d\rho_t  dt + \int_{\mathcal{X}} V_1 d\rho_1 - \int_{\mathcal{X}} \alpha \Psi^* \left( \frac{V_1}{\alpha} \right) d\nu \\
            & \qquad \qquad \quad -\int_0^1 \int_{\mathcal{X}} A \left[ \partial_t \rho + \nabla \cdot (u \rho) - \frac{\sigma^2}{2} \Delta \rho \right]dxdt,
        \end{split}
        \\
        \begin{split}
            = &\sup_{V,A} \inf_{\rho \in \mathcal{M}_2} \ \int_{\mathcal{X}} V_1-A_1 d\rho_1 + \int_{\mathcal{X}} A_0 d\rho_0 - \int_{\mathcal{X}} \alpha \Psi^* \left( \frac{V_1}{\alpha} \right) d\nu \\
            & \qquad \qquad \quad + \int_0^1 \int_{\mathcal{X}} \left( \partial_t A_t - \frac{1}{2} \| A_t \|^2 + \frac{\sigma^2}{2} \Delta A_t \right) d\rho_t dt,
        \end{split}
    \end{align}
    where $\rho_0 = \mu$.
    Note that $\rho$ is a non-negative Borel measure.
    If $V_1(x) < A_1(x)$ for some $x$, then the infimum of the first term of the above equation with respect to $\rho_1$ becomes $-\infty$.
    Thus, $V_1 \geq A_1$.
    Moreover, whenever $V_1 > A_1$, the corresponding $\rho_1$ vanishes.
    Thus, $V^\star_1 = A^\star_1$ for $\rho^\star_1$-almost surely.
    Therefore, by the optimality condition, the problem boils down to the following optimization problem:
    \begin{align}
        \begin{split}
            \sup_{A}  \inf_{\rho} \ \int_{\mathcal{X}} A_0 d\rho_0 - \int_{\mathcal{X}} \alpha \Psi^* \left( \frac{A_1}{\alpha} \right) d\nu + \int_0^1 \int_{\mathcal{X}} \left( \partial_t A_t - \frac{1}{2} \| A_t \|^2 + \frac{\sigma^2}{2} \Delta A_t \right) d\rho_t dt.
        \end{split}
    \end{align}
\end{proof}

\paragraph{Derivation of Inner-loop Objective (Eq. \ref{eq:dual_interpret4})}
The inner-loop problem of Eq. \ref{eq:dual_interpret2} can be written as follows:
\begin{align} \label{eq:inner_loop_appen}
    \begin{split}
    \inf_{(u, \rho)} &\mathbb{E}_{(t,x)\sim \rho} \left[ \frac{1}{2} \lVert u(t,x) \rVert^2 \right] + \mathbb{E}_{\hat{y}\sim \rho_1}\left[ V(1,\hat{y}) \right], \\
    {\text{s.t.   }}&\partial_t \rho + \nabla \cdot (u\rho) - (\sigma^2 / 2) \Delta \rho = 0, \ \rho_0 = \mu.
    \end{split}
\end{align}
Then, the dual form of Eq. \ref{eq:inner_loop_appen} with the dual variable $\lambda:[0,1]\times \mathcal{X}\rightarrow \mathbb{R}$ can be easily derived as follows:
\begin{multline} \label{eq:inner_loop_induce1_appen}
    \sup_\lambda \inf_{(u, \rho)} \int_{\mathcal{X}}\int^1_0 \frac{1}{2} \lVert u(t,x)\rVert^2 \rho(t,x) dt dx + \int_{\mathcal{X}} V(1,\hat{y}) \rho(1,\hat{y}) \\ - \int_{\mathcal{X}}\int^1_0 \lambda (t,x) \left(  \partial_t \rho(t,x) + \nabla \cdot (u\rho) - \frac{\sigma^2}{2} \Delta \rho \right) dtdx,
\end{multline}
where $\rho_0 = \mu$. From now on, we omit the condition $\rho_0 = \mu$ for simplicity.
By applying integration by parts to the last term of Eq. \ref{eq:inner_loop_induce1_appen}, we obtain
\begin{multline} \label{eq:inner_loop_induce2_appen}
    \sup_\lambda \inf_{(u, \rho)} \int_{\mathcal{X}}\int^1_0 \frac{1}{2} \lVert u(t,x)\rVert^2 \rho(t,x) dt dx + \int_{\mathcal{X}} \left( V(1,\hat{y}) - \lambda(1, \hat{y}) \right) \rho(1,\hat{y}) + \int_{\mathcal{X}} \lambda(0,x) d\mu(x) \\ 
    + \int_{\mathcal{X}}\int^1_0 \left( \partial_t \lambda (t,x) + \nabla \lambda(t,x) \cdot u (t,x)-\frac{\sigma^2}{2} \Delta \lambda(t,x) \right)\rho(t,x)  dtdx.
\end{multline}
    
Since the minimization objective of Eq. \ref{eq:inner_loop_induce2_appen} is quadratic with respect to variable $u$, we can derive the explicit solution $u=-\nabla \lambda$. Thus, we obtain
\begin{equation} \label{eq:inner_loop_induce3_appen}
    \sup_{\lambda} \inf_{\rho} \left[ \int^1_0 \int_{\mathcal{X}} \left(\partial_t \lambda - \frac{1}{2} \lVert \nabla \lambda \rVert^2 + \frac{\sigma^2}{2} \Delta \lambda \right) \rho_t dx dt + \int_{\mathcal{X}} (V_1 - \lambda_1) \rho_1 dx + \int_{\mathcal{X}} \lambda_0 d\mu \right].
\end{equation}
Finally, using the similar argument as the proof of Proposition \ref{prop:dualisdual_appen}, we obtain $\lambda = V \,\, \rho^\star_t$-a.s. and consequently, $u = -\nabla V$.
Finally, by substituting $V$ into $\lambda$, we obtain
\begin{equation} \label{eq:inner_loop_induce4_appen}
    \inf_{\rho} \left[ \int^1_0 \int_{\mathcal{X}} \left(\partial_t V - \frac{1}{2} \lVert \nabla V \rVert^2 + \frac{\sigma^2}{2} \Delta V \right) \rho_t dx dt  \right] + \int_{\mathcal{X}} V_0 d\mu.
\end{equation}

\paragraph{Conditional Sampling}
In this paragraph, we justify the sampling $x_{t+\Delta t} \sim \mathbb{P}_{t,1} \left(\cdot | x_t, \hat{y} \right)$ in line 5 of Algorithm \ref{alg:em_euot}.
Notice that we are sampling with the assumption of reciprocal property:
\begin{equation}
    \mathbb{P}_{t+\Delta t| t,1} \left(\cdot | x_t, \hat{y} \right) = \mathbb{Q}_{t+\Delta t|t,1} \left(\cdot | x_t, \hat{y} \right).
\end{equation}
Since $\mathbb{Q}_{t,1}$ is a Gaussian distribution of variance $\sigma^2 (1-t)$, the variance of $\mathbb{Q}_{t+\Delta t|t,1}$ is
$$
    \sigma^2 (1-t) \times \frac{\Delta t}{1-t} \times \frac{1 - t - \Delta t}{1-t} = \sigma^2 \frac{\Delta t(1-t-\Delta t)}{(1-t)^2}.
$$
Then, the average of $x_{t+\Delta t}$ is 
\begin{equation}
     \mathbb{P}_{t+\Delta t| t,1} (x_{t+\Delta t}| x_t, \hat{y}) = \mathcal{N}\left( x_{t+\Delta t} \bigg| \frac{\Delta t}{1-t} x_t + \frac{1-t-\Delta t}{1-t} \hat{y}, \ \sigma^2 \frac{\Delta t (1-t-\Delta t)}{(1-t)^2} \right).
\end{equation}

\section{Connection to Related Works} \label{appen:relatedworks}
In this section, we clarify the connection of our method to various existing OT algorithms.

\subsection{Connection to Wasserstein Lagrangian Flow}
In this section, we clarify the relationship of our approach compared to \citet{wlf}. 
In \cite{wlf}, they suggests a general framework for handling dynamical optimal transport problems for various cost functionals where the marginal distributions are fixed for some timesteps $\{0=t_0 , \ \dots, t_{N-1}, t_N=1\}$. 
The algorithm is derived by leveraging the Lagrangian dual formulations of these problems. Technically, this algorithm alternately updates the probability density $\rho_t$ and value function $V$.
It is easy to show that when the methodology proposed in \cite{wlf} is reduced to our problem, it becomes a max-min problem of Dual II (Proposition \ref{theorem:lagrangian_euot_appen}).

In our approach, we similarly derive a the objective for the density $\rho$ Appendix \ref{appen:derivations} by leveraging the lagrangian dual formulation. 
The key difference to the objective derived in \cite{wlf} lies in our active incorporation of optimality conditions of our EUOT formulation, which allowed us to avoid the minimax objective on the HJB-like equation (Eq. \ref{eq:lagrangian_euot_appen}). We believe this difference contributes to the scalability of our network, though the further investigation in required to fully understand its impact.

Furthermore, there is a significant difference in the evaluation process. 
Unlike our approach, \cite{wlf} does not directly learn the transport map. 
Instead, they train the value function $v_\phi$ and generate samples by solving stochastic differential equation (SDE) dynamics using the gradient of $v_\phi$, requiring approximately 100 function evaluations (NFE) for the evaluation. In contrast, our method directly parametrizes the transport map, enabling one-step evaluation.





\subsection{Connection to Diffusion Schr\"{o}dinger Bridge Matching (DSBM)}
A key difference between our method and Schr\"{o}dinger bridge matching \cite{imf,gsbm} is that our approach eliminates burdensome simulations by directly parametrizing the transport plan $T_\theta$.
In \cite{imf}, the drift $u:=u_\theta$ of the SDE $dX^u_t = u dt + \sigma dW_t$ is learned in the following way: first, sample pairs \{$(X^u_0,X^u_1)$\} are obtained through SDE simulation. 
Then, sample an intermediate samples using the reciprocal property $X_t \sim \rho_{t|0,1}(\cdot| X^u_0, X^u_1)$. 
Note that this approach requires the heavy simulation of $X^u_1$ starting from $X^u_0$. 
Then, the drift $u_\theta$ is updated by Markovian projection (or action matching) using these triplets $\{ (X^u_0,X_t,X^u_1) \}$.

In contrast, our model directly obtain $\{(X_0, X_1)\}$ through the transport plan $T_\theta$. Then, we directly obtain $X_t \sim \rho_{t|0,1}(\cdot| X_0, X_1)$ by leveraging reciprocal property (Eq. 16). 
While our method reduces the simulation burden, it face some computational burden by the PINN-like objective.

\subsection{Other Related Works}
In this paragraph, we introduce several works that approximate the solution of EUOT between the continuous measures. It is challenging to directly address the EUOT problem in a continuous setting \cite{lightUOT}. Hence, several approaches utilize discrete minibatch EUOT approximations based on Sinkhorn Algorithm \cite{eyring2024unbalancedness, sinkhorn2}. These works employ minibatch EUOT plans to train flow matching \cite{lipman2022flow} or conditional flow matching \cite{tong2023conditional}. On the other hand, \cite{lightUOT} proposes a lightweight solver for the EUOT problem. This method is based on a Gaussian mixture approximation of the EUOT plan. These approaches rely on strong assumptions that the OT plan can be approximated by a minibatch EUOT plan or a Gaussian mixture. To the best of our knowledge, our work is the first attempt to propose a method for solving the continuous EUOT problem in Eq. \ref{eq:euot}, without any additional assumptions on the EUOT plan. Notably, our method also benefits from simulation-free training and offers one-step sample generation.

\section{Implementation Details} \label{appen:implementation_details}
Unless otherwise stated, the source distribution $\mu$ is a standard Gaussian distribution with the same dimension as the data (target) distribution $\nu$.
Moreover, computing $\Delta v_\phi$ in line 6 is burdensome, hence, we approximate it through Hutchinson-Skilling trace estimator \cite{hutchinson, skilling}.
Our implementation of the trace estimator follows the likelihood estimation implemented in \citet{scoresde}.

\subsection{2D Experiments}
\paragraph{Data Description} We obtain each data as follows:
\begin{itemize}
    \item \textbf{8-Gaussian}: For $m_i = 12 \left(\cos{\frac{i}{4}\pi},\sin{\frac{i}{4}\pi}\right)$ for $i=0, 1, \dots, 7$ and $\sigma=0.04$, the target distribution is defined as the mixture of $\mathcal{N}(m_i,\sigma^2)$ with an equal probability.
    \item \textbf{Moon to Spiral}: We follow \citet{uotmsd}.
    \item \jm{\textbf{Gaussian Experiments}: We follow \citet{imf}, note that the data dimension is 50.}
\end{itemize}

\paragraph{Network Architectures}
\jm{
Let $y$ be concatenated variable of source data $x$ and auxiliary variable $z\sim \mathcal{N}(0,I)$.
The dimension of the auxiliary variable is set to be same as the dimension of data.
We parametrize the generator by $T_\theta(x, z) = x + t_\theta (y)$ where $t_\theta$ is the MLP of three hidden layers. Here, $y$ is the concatenation of $x$ and $z$.
Moreover, for the discriminator, we concatenate $x$ and $t$ and pass it through a 3-layered MLP.
We employ a hidden dimension of 256 and a SiLU activation function.
}

\paragraph{Training Hyperparameters}
\jm{
We trained for 120K iterations with the batch size of 1024. 
We set Adam optimizer with $(\beta_1, \beta_2 )=(0, 0.9)$, learning rate of $2\times 10^{-4}$ and $10^{-4}$ for the transport map $T_\theta$ and the potential $v_\phi$, respectively.
We use a cosine scheduler to gradually decrease the learning rate from $10^{-4}$ to $5\times 10^{-5}$.
We update the inner objective (the generator) three times for every single update of the outer objective.
We use the number of timesteps of $20$, $\alpha=1$, and uniform distribution for $\mathcal{T}$.
Moreover, we set $\lambda_G = 0.1, \ \lambda_D = 1$, $p=1$ and $\sigma = 0.8$.}

\paragraph{Discrete OT Solver}
We used the POT library \cite{pot} to obtain an accurate transport plan $\pi_{pot}$. 
We used 1024 training samples for each dataset in estimating $\pi_{pot}$ to sufficiently reduce the gap between the true continuous measure and the empirical measure.

\subsection{Implemenation on Image Datasets}
\paragraph{CIFAR-10 Generation}
We follow the generator $T_\theta$ architecture and the implementation of \citet{uotm}.
Moreover, we follow the discriminator architecture in \citet{rgm}. 
We trained for 300K iterations with the batch size of 256. 
We use Adam optimizer with $(\beta_1, \beta_2 )=(0, 0.9)$, learning rate of $10^{-4}$.
We use a cosine scheduler to gradually decrease the learning rate from $10^{-4}$ to $5\times 10^{-5}$.
We used a gradient clip of $1.0$ for both the generator and discriminator.
We use the number of timesteps of $20$, $\alpha=1$, $p=2$, $\sigma=0.1$, and linear time distribution $\mathcal{T}$.
Moreover, when $\Psi^*(x) = 2 \log(1+e^x) - 2 \log 2$ (\textbf{EUOT-Soft}), we set $\lambda_G = 5, \ \lambda_D = 1$.
When $\Psi^*(x) = 5 e^{x/5} - 5$ (\textbf{EUOT-KL}), we use $\lambda_G = 1, \ \lambda_D = 1$.
Note that $\Psi^*(x) = 5 e^{x/5} - 5$ is the convex conjugate of entropy function $\Psi$ for $5 D_{\text{KL}}$.
Unless otherwise stated, we consider $\Psi^*(x) = 2 \log(1+e^x) - 2 \log 2$.
\jm{To ensure stable training, we introduce the $R_1$ regularization term with a regularization coefficient of $\lambda=2.5$ during the update of the potential $v_\phi$. n other words, we add  $\lambda \lVert \nabla v_\phi(t, x) \rVert^2$ on line 8 of Alg, \ref{alg:em_euot}.}

\paragraph{Image-to-image Translation}
\jm{We follow the generator $T_\theta$ architecture of \cite{uotm}. For the discriminator, we follow the largest discriminator used in \cite{uotm} when training CelebA-HQ 256x256 dataset. We train for 30K iterations with the batch size of 64. We use Adam optimizer with  $(\beta_1, \beta_2 )=(0, 0.9)$. The learning rate of generator and discriminator is  $2\times 10^{-4}$ and $10^{-4}$, respectively. We use the number of timesteps of 20, $\alpha=1$, $p=2$, $\sigma=0.5$, $\lambda_G = \lambda_D = 1$, and uniform time distribution $\mathcal{T}$.
Note that in the image translation experiments, we do not use clip nor $R_1$ regularization.}

\paragraph{Evaluation Metric}
For the evaluation of CIFAR10, we used 50,000 generated samples to measure FID \citep{stylegan} scores.
\jm{
For the Wild$\rightarrow$Cat experiments, we compute FID statistics based on train dataset of the target dataset. Then, we generate samples $T_\theta(x)$ from each source sample $x$ from test dataset.
We sampled multiple generated samples from each source. Then, we calculate the FID based on this generated samples. Note that this follows the implementation of \cite{sb-flow}. 
For CelebA experiment, we follow the metric used in \cite{not}. Specifically, we compute FID through test datasets.
}

\section{Additional Results} \label{appen:results}

\begin{figure}[ht]
    \centering
    \includegraphics[width=.86\textwidth]{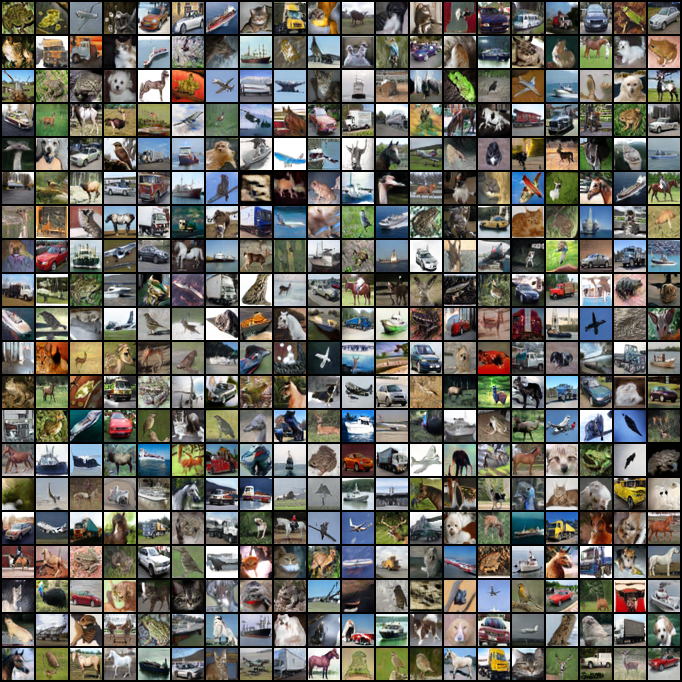}
    \caption{Generated samples from our model trained on CIFAR-10 for $\sigma=0.1$.}
\end{figure}

\begin{figure}[ht]
    \centering
    \includegraphics[width=.86\textwidth]{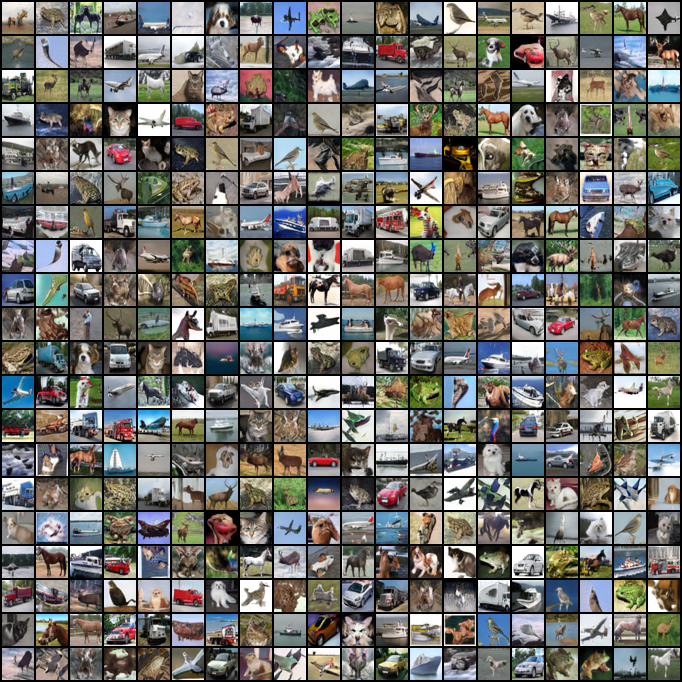}
    \caption{Generated samples from our model trained on CIFAR-10 for $\sigma=0.5$.}
\end{figure}

\begin{figure}[ht]
    \centering
    \includegraphics[width=.86\textwidth]{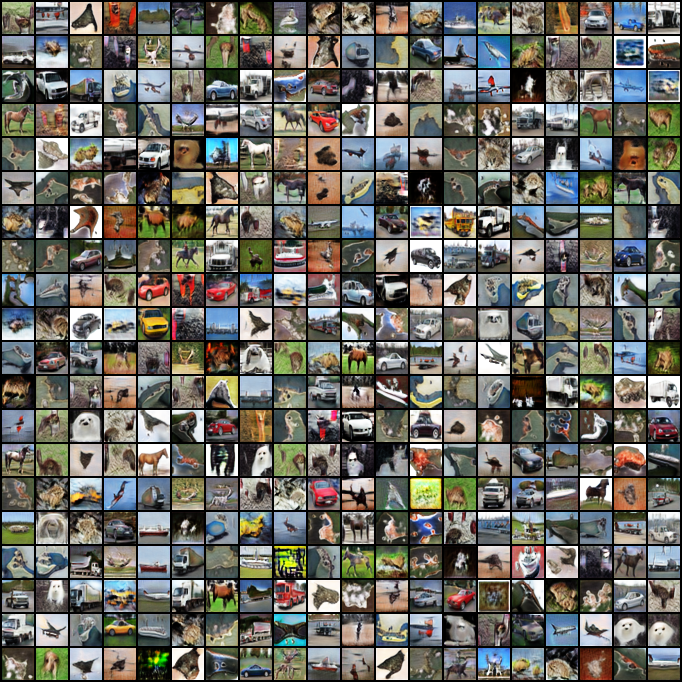}
    \caption{Generated samples from our model trained on CIFAR-10 for $\sigma=1.0$.}
\end{figure}

\begin{figure}[ht]
    \centering
    \includegraphics[width=.495\textwidth]{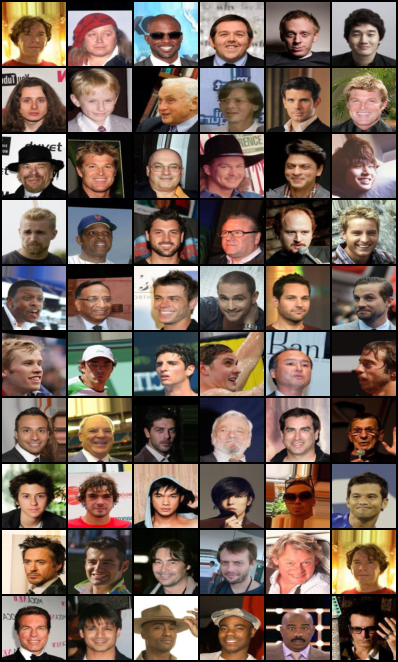}
    \hfill
    \includegraphics[width=.495\textwidth]{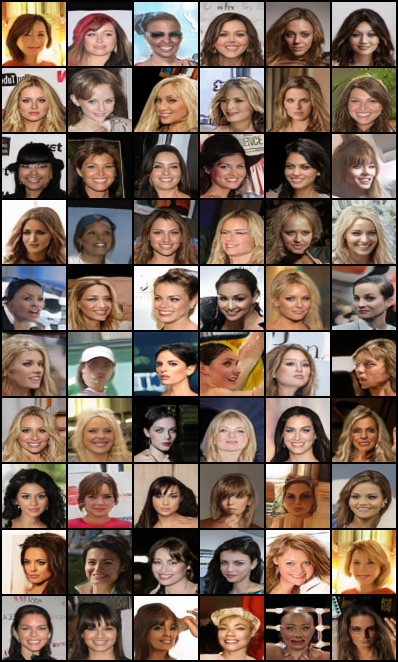}
    \caption{Unpaired $male \rightarrow female$ translation for $64 \times 64$ CelebA images.}
\end{figure}

\begin{figure}[ht]
    \centering
    \includegraphics[width=.495\textwidth]{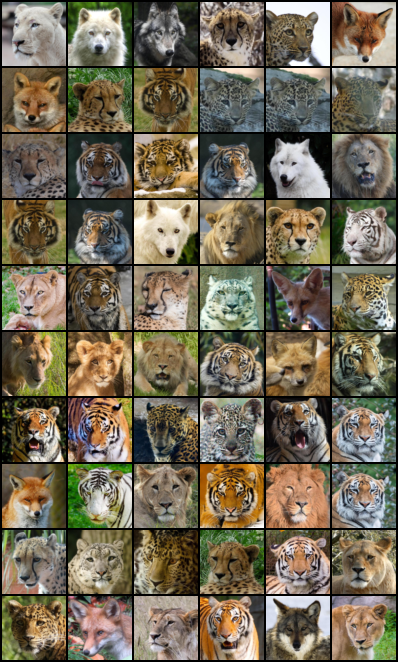}
    \hfill
    \includegraphics[width=.495\textwidth]{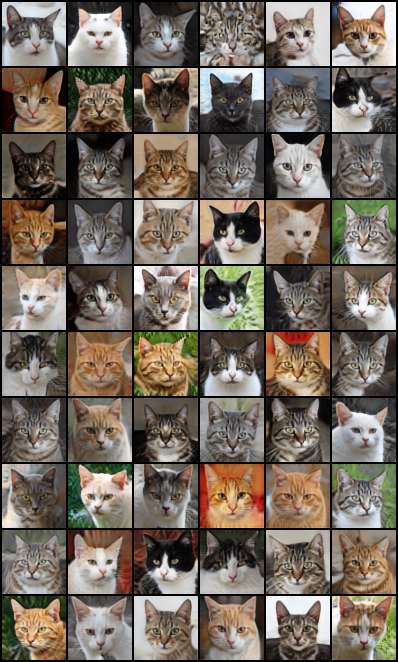}
    \caption{Unpaired $wild \rightarrow cat$ translation for $64 \times 64$ CelebA images.}
\end{figure}

\end{document}